\documentclass[conference]{IEEEtran}

%
\ifCLASSINFOpdf
\else
\fi
%
%

\usepackage{booktabs} 
\usepackage{amsmath, amssymb, bm}
\usepackage[all]{xy}
\usepackage{diagrams}
\usepackage{enumerate}
\usepackage{fancyvrb}
\usepackage{mathtools}
\usepackage{algorithm}
\usepackage{algcompatible}
\usepackage{algpseudocode}
\usepackage{subcaption}
\usepackage{soul}
\usepackage{setspace} 
\usepackage{footnote}
\makesavenoteenv{tabular}

\usepackage[T1]{fontenc}
\usepackage[font=small,labelfont=bf,tableposition=top]{caption}

\DeclareCaptionLabelFormat{andtable}{#1~#2  \&  \tablename~\thetable}

\newarrow{Line} -----
\newarrow{Dash}{}{dash}{}{dash}{}
\newarrow{Corresponds} <--->
\newarrow{Mapsto} |--->
\newarrow{Into} C--->
\newarrow{Embed} >--->
\newarrow{Onto} ----{>>}
\newarrow{TeXonto} ----{->>}
\newarrow{Nto} --+->
\newarrow{Dashto} {}{dash}{}{dash}>

\newcommand{\df}{\displaystyle\frac}

\newcommand{\quotes}[1]{``#1''}
\newcommand{\R}{\mathbb{R}}
\usepackage{color}

\definecolor{lavenderpurple}{rgb}{0.59, 0.48, 0.71}

\setlength{\textwidth}{16cm} \setlength{\textheight}{22cm}
\setlength{\oddsidemargin}{0cm} \setlength{\topmargin}{0cm}
\setlength{\evensidemargin}{0cm} \setlength{\topmargin}{0cm}

\newcounter{mycounter}

\usepackage{amsthm}
\newtheorem{Proposition}{Proposition}
\newtheorem*{proof*}{Proof}
\newtheorem{Corollary}{Corollary}

\newtheorem{Lemma}{Lemma}

\hyphenation{op-tical net-works semi-conduc-tor}

\begin{document} \sloppy
\title{Lazy stochastic principal component analysis}

\author{\IEEEauthorblockN{Michael Wojnowicz, Dinh Nguyen, Li Li, and Xuan Zhao}
\IEEEauthorblockA{Department of Research and Intelligence, Cylance Inc., Irvine, CA 92612}
\IEEEauthorblockA{ \{firstinitiallastname\}@cylance.com }
}

\maketitle


%
\IEEEpeerreviewmaketitle




\begin{abstract}
Stochastic principal component analysis (SPCA) has become a popular dimensionality reduction strategy for large, high-dimensional datasets.   We derive a simplified algorithm, called Lazy SPCA, which has reduced computational complexity and is better suited for large-scale distributed computation.  We prove that SPCA and Lazy SPCA find the same approximations to the principal subspace, and that the pairwise distances between samples in the lower-dimensional space is invariant to whether SPCA is executed lazily or not.   Empirical studies find downstream predictive performance to be identical for both methods, and superior to random projections, across a range of predictive models (linear regression, logistic lasso, and random forests).  In our largest experiment with 4.6 million samples, Lazy SPCA reduced 43.7 hours of computation to 9.9 hours.  Overall, Lazy SPCA relies exclusively on matrix multiplications, besides an operation on a small square matrix whose size depends only on the target dimensionality.
\end{abstract}

\maketitle

\section{Stochastic Dimensionality Reduction}
Stochastic dimensionality reduction (DR) exploits randomization to scale up traditional techniques to large, high-dimensional datasets.  Stochastic DR may be applied as a preprocessing step before feeding the data into a computationally expensive classifier, e.g. a neural network \cite{dahl}, or it may be directly embedded within algorithms to improve scalability~\cite{influence}.



\subsection{Random Projection (RP)} \label{RP}
Given a dataset $X \in \R^{m \times n}$ of $m$ samples in $n$ dimensions, we perform a \emph{random projection} (RP) to $k<n$ dimensions via 
\begin{equation}  \label{rp}
U = \df{1}{c} X\Omega 
\end{equation}
where $\Omega \in \R^{n \times k}$ is a matrix of random numbers and $c$ is a scalar for norm preservation that depends upon the random projection method used.\footnote{Where possible, vectors in $\R^m$ are denoted by $u$ and vectors in $\R^n$ are denoted by $v$.}${}^{,}$\footnote{Note that a random projection is technically not actually a projection (an endomorphism $X \rTo^P X$ that satisfies $P^2 = P$).}    Random projection  is a computationally cheap technique that approximately preserves pairwise distances between samples with high probability (with error depending on $k$ and $m$). 

There are many methods for constructing the random matrix $\Omega$. A theoretically convenient Gaussian RP takes each $\Omega_{ij}$ as an i.i.d draw from a normal distribution, for example $\Omega_{ij} \overset{i.i.d}{\sim} N(0,1)$.   \emph{Very sparse random projections} \cite{li} save storage and computation by taking each $\Omega_{ij}$ as an i.i.d draw from $\{-1,0,1\}$ with probabilities $\{\tfrac{1}{2p}, 1-\tfrac{1}{p}, \tfrac{1}{2p} \}$ for appropriate choice of $p$. Other variants with similar concentration of measure properties include the subsampled randomized Hadamard transformation (also known as a fast Johnson-Lindentrauss transform)~\cite{ailon} and feature hashing~\cite{hashing}. 

\subsection{Stochastic Principal Component Analysis (SPCA)} \

\begin{table*}
   \caption{Commonly used symbols and terms}
  \begin{tabular}{ll}
 $\widehat{X}$ &  Approximation to dataset $X$, with general form $\widehat{X} = f'f^TX$ ($f'$ is a pseudo-inverse for $f^T$)\\
  $\widehat{\mathcal{I}}$ & Approximation to $\text{im}(X)$, formed via random projection\\
$QQ^TX, \; U'U^TX$ & Matrix implementation of $f'f^TX$ with and without orthonormal basis for $\widehat{\mathcal{I}}$\\
 $V^s, V^\ell$ & Matrix of approximate right singular vectors formed by SPCA and Lazy SPCA, respectively\\
$\mathcal{P}$ &Principal subspace (spanned by $k$ dominant principal component directions)\\
$\widehat{\mathcal{P}}$ & Approximation to $\mathcal{P}$, given by span of columns of either $V^s$ or $V^\ell$\\
$\pi^s, \pi^\ell$ & Dimensionality reduction maps for SPCA and Lazy SPCA, respectively\\
$e_i, id_S, \langle \cdot \rangle, X^{-1}(S)$ & $i$th element of the standard basis, identity on set S,  taking the span, preimage of $S$ under $X$ \\
\end{tabular}
  \label{terminology}
\end{table*}

Principal component analysis (PCA) is a classical linear dimensionality reduction strategy.   Given dataset $X$, one finds a $k$-dimensional subspace $\mathcal{P}_\text{PCA}$ (the \emph{principal subspace}) on which projection of the data has the largest possible variance.  Stochastic principal component analysis (SPCA)~\cite{halko} works similarly, but it uses randomization to find an approximation $\widehat{\mathcal{P}} \approx \mathcal{P}_\text{PCA}$.\footnote{For common symbols and terms, see Table~\ref{terminology}.}    SPCA has a greater computational cost than RP, but because PCA satisfies well-known optimality criteria, one might expect $\widehat{\mathcal{P}}_{\text{SPCA}}$ to better approximate $\mathcal{P}_\text{PCA}$, thereby producing a \quotes{better} dimensionality reduction than RP.   As a result, SPCA has become widely used, implemented in popular libraries by MATLAB~\cite{matlab}, scikitlearn~\cite{scikitlearn}, Apache Mahout~\cite{mahout}, Facebook~\cite{facebook}, and others.\footnote{The same algorithm may be called stochastic/randomized PCA or stochastic/randomized SVD.   See the last paragraph in this section.}   


SPCA begins by solving the \emph{approximate (low-rank) matrix decomposition} (AMD) problem \cite{martinsson}, \cite{kumar}: Given a matrix $X$, a target \text{rank} $k$ and a number $l \geq k$,\footnote{Typically $l=k+p$, where p is a small \emph{oversampling parameter.}} we seek to construct a matrix $Q$ with $l$ orthonormal columns  such that
\begin{equation} \label{traditionalapproximation}
||X - QQ^T X|| \approx \min_{A: \, rank(A) \leq k} ||A - X||
\end{equation}  
Most commonly, the matrix $Q$ is found by using the RP in (\ref{rp}) to produce a matrix $U$ whose columns lie within $\text{im}(X)$ (and in fact approximate $\text{im}(X)$ well~\cite{martinsson}), and then orthonormalizing $U$, e.g. via QR decomposition.  The resulting matrix approximation is provably good; for example, Theorem 1.1 of \cite{martinsson} states that when a Gaussian RP is used in (\ref{rp}), the approximation satisfies
\begin{equation}\label{spca_error}
E||X -QQ^TX|| \leq \bigg[ 1+ \df{4\sqrt{l}}{l-k-1} \cdot \sqrt{min \{m,n\}} \bigg] \sigma_{k+1}
\end{equation}
where $\sigma_{k+1}$ is the $(k+1)$st largest singular value of $X$.

Thus, the approximation $\widehat{X} = QQ^TX$ is computationally useful because, on one hand, it lies within a small polynomial factor of the minimum possible error $\sigma_{k+1}$ for a rank-$k$ approximation, and on the other hand, it can be expressed as the product of two factors, $Q$ and $Q^TX$, which are substantially smaller than $X$.  Then, factorizing $\widehat{X}$ yields an approximate factorization of $X$.   In particular, we can take the SVD of the small factor $Q^TX$ to obtain:

\begin{equation}\label{spca_eqn}
X \approx QQ^TX = Q\tilde{U} \Sigma V^T = U \Sigma V^T
\end{equation}

SPCA implementations~\cite{halko},~\cite{scikitlearn} use the columns of $V$ in (\ref{spca_eqn}) as approximate principal component directions for $X$.  Technically, the right singular vectors of $X$ are its principal components only if $X$ is centered.  However, centering may pose problems for large, sparse datasets $X$, and the SPCA procedure in (\ref{spca_eqn}) will project samples into the same subspace $\widehat{\mathcal{P}}$ of $\R^n$ regardless of whether X is centered.  Moreover, right singular vectors still approximate principal components by adhering to the theory of \quotes{uncentered} principal components~\cite{jolliffe}.  Thus, in the context of this paper, we will refer to \emph{approximate right singular vectors} as \emph{approximate principal components.}  



\subsection{Lazy stochastic principal component analysis (Lazy SPCA)} \
In Proposition~\ref{lazy_fixed_rank_approximations}, we will show that the quality of the low-rank approximation $QQ^T X$ depends only upon the subspace formed by the collection $\{q_1, \dots , q_l\}$.   Thus, we may generalize the construction of $\widehat{X}$ so that there is no need for orthonormalizing vectors when approximating $\text{im}(X)$.  That is, we generalize $X \approx QQ^TX$ in (\ref{traditionalapproximation}) to what we call a \emph{lazily approximated (low-rank) matrix decomposition (Lazy AMD)}. 
\begin{equation} \label{uapproximation}
X \approx U'U^TX
\end{equation}
where the columns of $U$ are not necessarily orthonormal, $U'$ is a pseudo-inverse for $U$, and $U$ and $U'$ will be given in  (\ref{Ut}) and (\ref{U'}).   

Lazy SPCA cheaply obtains a good dimensionality reduction by exploiting Lazy AMD followed by a \quotes{premature truncation} trick (see Section~\ref{SASVD}).  Despite the simplification, LSPCA projects samples to the same subspace of $\R^n$ (Proposition 2) and outputs identical pairwise distances between samples in the new space (Proposition 3), yielding equivalent performance in downstream predictions (Experiments 1 and 2).   This is true even though Lazy SPCA has reduced computational complexity (Proposition 4), substantially reducing run time for many large-scale applications (Experiment 1).    Moreover, the algorithm can now be expressed entirely in terms of easily distributed matrix multiplications, with the exception of a single eigendecomposition of a relatively small $l \times l$ matrix. 

\section{THEORY} \label{THEORY}
\subsection{Overview}

The overview of our argument is described here and reflected in Figure~\ref{theory_overview}. 
\begin{enumerate}
\item We view dataset $X$ as a linear map from $\R^n$ to $\R^m$ with image $\text{im}(X)$. Using a random projection, we can construct a good approximation $\widehat{\mathcal{I}} \approx \text{im}(X)$ by taking $\widehat{\mathcal{I}} $ to be the span of the columns of $U=X\Omega$, where $\Omega$ is a random projection matrix~\cite{martinsson},~\cite{mahoney}. We use $\widehat{\mathcal{I}}$ to construct a low-rank approximation $\widehat{X}$, which maps $\R^n$ to subspace $\widehat{\mathcal{I}} \subset \R^m$ instead of to $\text{im}(X)$.  The approximation $\widehat{X}$ has error bounds given in (\ref{spca_error}).
\item The approximation error depends only on the subspace $\widehat{\mathcal{I}} $, and not on the basis for that subspace.  To see this, we will express $\widehat{X}=f'f^TX$ using constructions\footnote{Here, $f'$ is a pseudo-inverse for $f^T$; the notation generalizes the special case where $f'f^T = QQ^T$ where $Q$ has orthonormal columns.} in (\ref{fT}) and (\ref{fprime}) and show:
\begin{enumerate}
\item  For points in $\R^m$, the operation $f'f^T$ (which maps $X$ to $\widehat{X}$) will leave points in $\widehat{\mathcal{I}} $ unchanged, and will map all points in $\widehat{\mathcal{I}} ^\perp$ to 0 [see (\ref{identity_piece}), (\ref{null_piece})].     Since $\widehat{\mathcal{I}}  \approx \text{im}(X)$ by Step 1, most points $u \in \text{im}(X)$ will be well-approximated by component $u_a$, where $u=u_a+u_b$ and $u_a \in \widehat{\mathcal{I}} , u_b \in \widehat{\mathcal{I}} ^\perp$.
\item So back in $\R^n$, where samples naturally live, points which get mapped to $\widehat{\mathcal{I}} $ by $X$ will be unchanged by the approximation, and points which get mapped to $\widehat{\mathcal{I}} ^\perp$ will get mapped to 0 [see (\ref{identity_piece_in_Rn}), (\ref{null_piece_in_Rn})].   
\end{enumerate}
Thus, the quality of the low-rank approximation $f'f^TX$ to $X$ does not depend on representing $\widehat{\mathcal{I}} $ with an orthonormal basis (see Proposition 1), even though this procedure is commonly done (e.g., ~\cite{halko},~\cite{martinsson},~\cite{scikitlearn},~\cite{mahout}).   
\item The approximate principal subspace for $X$ is the span of the right singular vectors of $f'f^TX$, and this can be obtained by simply taking the right singular vectors of $f^TX$. (See Proposition 2).  
\end{enumerate}

\begin{figure}
  \centering
  \includegraphics[width=\linewidth]{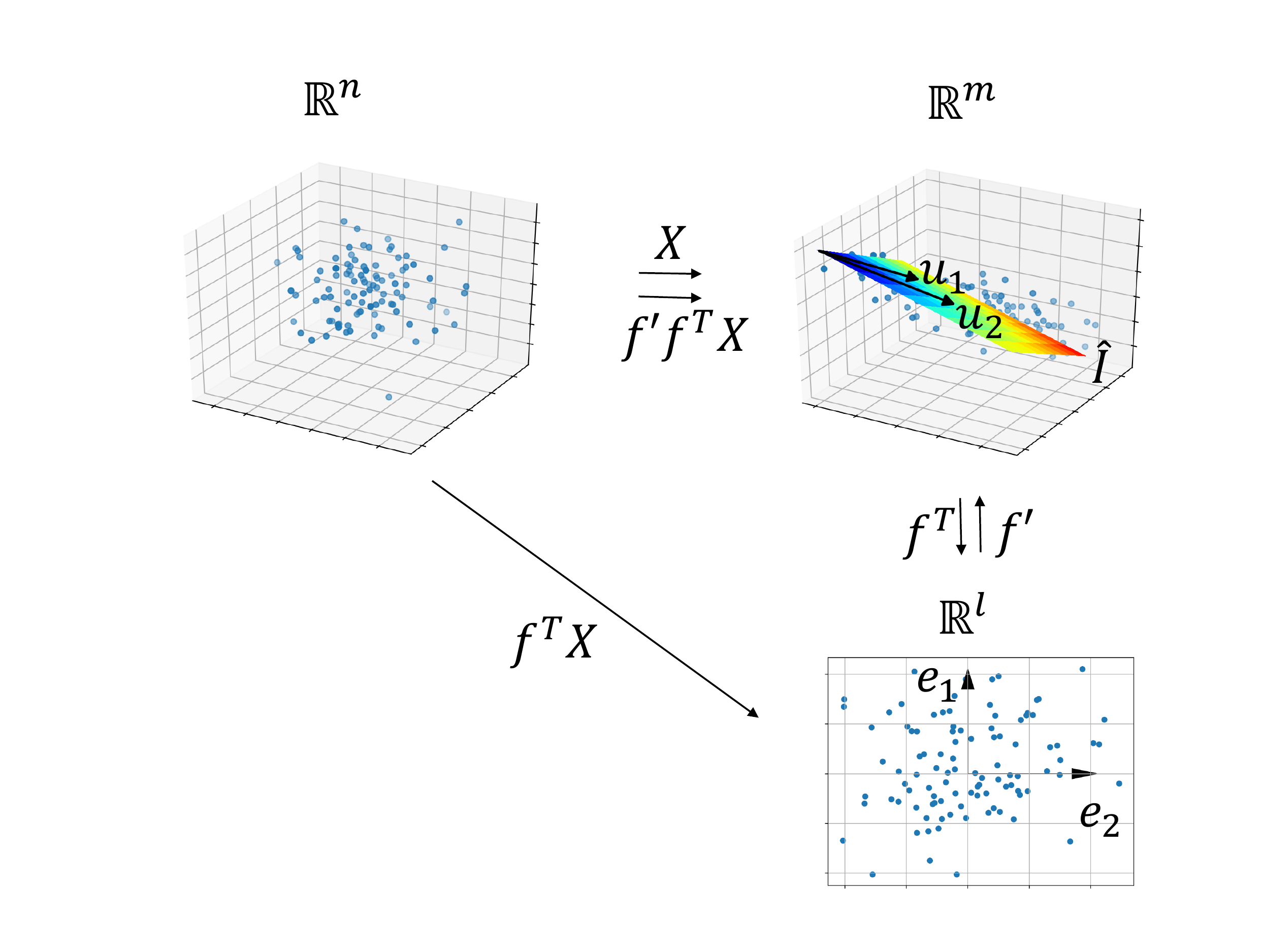}
\caption{An illustration to help motivate Lazy SPCA}
\label{theory_overview}
\end{figure}

\subsection{Lazily approximated low-rank matrix decompositions (Lazy AMD) } \label{low_rank_matrix_approximations}

\subsubsection{Construction of $U$, $U'$} \label{approximate_covers}

Consider the Lazy AMD introduced in (\ref{uapproximation}). We construct $U$ and $U'$ via subspaces $\widehat{\mathcal{I}}$ that approximate $\text{im}(X)$.  Suppose we choose a linearly independent but not necessarily orthonormal collection $\{u_1, \dots , u_l\}$ such that $\widehat{\mathcal{I}} := \langle u_1, \dots , u_l \rangle$ is a subspace of $\text{im}(X)$.   Define 
\begin{align}
\mathbb{R}^m & \rTo^{f^T} \mathbb{R}^l \nonumber  \\
u & \mapsto (u^T_1 u, \dots , u^T_l u) \label{fT}
\end{align}
So $f^T$ has matrix form
\begin{equation}  \label{Ut}
U^T = \left( \begin{array}{ccc} u_{11} & \dots & u_{1m} \\ \vdots & \ddots & \vdots \\ u_{l1} & \dots & u_{lm} \end{array} \right)
\end{equation}

\begin{Lemma} \label{bijective} The map $f^T$ above maps $\langle u_1, \dots , u_l \rangle$ bijectively to $\mathbb{R}^l$.
\end{Lemma}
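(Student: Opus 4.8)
The plan is to exploit that $f^T$ is linear with matrix $U^T$, so its restriction to $\widehat{\mathcal{I}} = \langle u_1, \dots, u_l\rangle$ is a linear map between two vector spaces of the same finite dimension: the domain $\widehat{\mathcal{I}}$ is $l$-dimensional because $\{u_1,\dots,u_l\}$ is linearly independent by hypothesis, and the codomain is $\R^l$. For a linear map between finite-dimensional spaces of equal dimension, injectivity, surjectivity, and bijectivity are all equivalent (by the rank--nullity theorem). So I would reduce the claim to showing that the restriction $f^T|_{\widehat{\mathcal{I}}}$ is injective, i.e. has trivial kernel.

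To prove injectivity, I would take any $w \in \widehat{\mathcal{I}}$ with $f^T(w) = 0$ and show $w = 0$. By the definition in (\ref{fT}), $f^T(w) = 0$ means $u_i^T w = 0$ for every $i = 1, \dots, l$, so $w$ is orthogonal to each spanning vector $u_i$, and hence orthogonal to their entire span $\widehat{\mathcal{I}}$. But $w$ itself lies in $\widehat{\mathcal{I}}$, so $w$ is orthogonal to itself, giving $\|w\|^2 = w^T w = 0$ and therefore $w = 0$. This shows $\ker\big(f^T|_{\widehat{\mathcal{I}}}\big) = \{0\}$, which by the dimension-counting reduction above yields the desired bijection onto $\R^l$.

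An equivalent route, which may be worth recording since it connects to the pseudo-inverse constructions to come in (\ref{fprime}), is to write $w = \sum_j c_j u_j$ and observe that $f^T(w) = 0$ is exactly the linear system $Gc = 0$, where $G = U^T U$ is the Gram matrix of $\{u_1,\dots,u_l\}$ and $c = (c_1,\dots,c_l)^T$. Since the $u_i$ are linearly independent, $G$ is positive definite and hence invertible, forcing $c = 0$ and thus $w = 0$. In either approach the only place the hypothesis is used is linear independence of $\{u_1,\dots,u_l\}$ (to get $\dim\widehat{\mathcal{I}} = l$ and the nonsingular Gram matrix), so there is no genuine obstacle here; the one point to state carefully is that $f^T$ is being restricted to $\widehat{\mathcal{I}}$ rather than considered on all of $\R^m$ (where it is far from injective when $l < m$), which is precisely what makes the equal-dimension argument applicable.
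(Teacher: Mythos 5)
Your proof is correct, but it runs in the opposite direction from the paper's. The paper first establishes \emph{surjectivity}: since the rows of $U^T$ are the linearly independent vectors $u_1^T,\dots,u_l^T$, we have $\text{rank}(f^T) = l$, so $f^T$ maps all of $\R^m$ onto $\R^l$; then, writing any preimage as $u = u' + u''$ with $u' \in \langle u_1,\dots,u_l\rangle$ and $u'' \in \langle u_1,\dots,u_l\rangle^\perp$ and noting $f^T(u'')=0$, it concludes that the restriction to the span is still surjective, and finally obtains injectivity by dimension count. You instead prove \emph{injectivity} directly---a vector in the span that is orthogonal to every $u_i$ is orthogonal to itself, hence zero---and obtain surjectivity from the same dimension count. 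Both arguments rest on linear independence plus the equal-dimension equivalence of injective and surjective, but yours is slightly more self-contained, since it never needs the rank/row-rank identity or surjectivity on the ambient $\R^m$. Your Gram-matrix variant is a worthwhile bonus beyond what the paper records: invertibility of $G = U^TU$ not only re-proves injectivity but also gives the explicit formula $U' = U(U^TU)^{-1}$, whose columns lie in $\langle u_1,\dots,u_l\rangle$ and satisfy $U^TU' = I_l$; this makes concrete the pseudo-inverse of (\ref{U'}) that the paper only defines abstractly through the conditions $f^T(u_i') = e_i$ in (\ref{fprime}).
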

\begin{proof} Since $\text{rank}(f^T) = \text{rank}(U^T) = \text{rowrank}(U^T) = l$, it follows that $\text{im}(f^T) = \mathbb{R}^l$. So for any $t \in \mathbb{R}^l$ there exists $u \in \mathbb{R}^m$ such that $f^T(u) = t$. Write $u = u' + u''$ where $u' \in \langle u_1, \dots , u_l \rangle$ and $u'' \in \langle u_1, \dots , u_l \rangle^{\perp}$. Then $f^T(u) = f^T(u' + u'') = f^T(u') + f^T(u'') = f^T(u') = t$. So $f^T$ maps $\langle u_1, \dots , u_l \rangle$ surjectively onto $\mathbb{R}^l$. It follows that $\text{ker}(f^t_{| \langle u_1, \dots , u_l \rangle}) =  \{0\}$ and $f^T$ maps $\langle u_1, \dots , u_l \rangle$ injectively to $\mathbb{R}^l$. 
\end{proof}

Thus, there exist $u_1^\prime, \hdots, u_l^\prime \in \widehat{\mathcal{I}}$ such that $f^T(u_i^\prime) = e_i$.  If we define
\begin{align}
\mathbb{R}^l & \rTo^{f^\prime} \mathbb{R}^m \nonumber \\
e_i & \mapsto u_i^\prime \nonumber  \\
\alpha_1 e_1 + \dots + \alpha_l e_l & \mapsto \alpha_1 u_1^\prime + \dots + \alpha_l u_l^\prime \label{fprime}
\end{align} 
then
\begin{align}
f^T f^\prime &= id_{\R^l} \label{updown} \\
f'f^T_{| \langle u_1, \dots , u_l \rangle} &= id_{\langle u_1, \dots , u_l \rangle}  \label{downup}
\end{align}

So $f'$ is a pseudo-inverse for $f^T$.  Note $f'$ has matrix form
\begin{equation} \label{U'}
U' = \left( \begin{array}{ccc} u_{11}' & \dots & u_{1l}' \\ \vdots & \ddots & \vdots \\ u_{m1}' & \dots & u_{ml}' \end{array} \right)
\end{equation}

\subsubsection{Effect of the approximation $f'f^TX$ to $X$ on points in $\R^n$} 

Using $\mathbb{R}^n = \langle u_1, \dots , u_l \rangle \oplus \langle u_1, \dots , u_l \rangle^{\perp}$, we have 
\begin{align}
\text{im}(X) \cap \langle u_1, \dots , u_l \rangle^{\perp} & \neq 0 \quad \text{ if } \widehat{I}  
\subsetneq \text{im}(X) \label{proper_subcover_in_Rm} \\
& =0 \quad \text{ if } \widehat{I} = \text{im}(X) 
\end{align} 

Thus, we describe the operator $f'f^T$ in terms of its action on two complementary subspaces
\begin{equation}
\label{identity_piece}
f'f^T_{| \langle u_1, \dots , u_l \rangle} = id_{\langle u_1, \dots , u_l \rangle}
\end{equation}
and
\begin{equation}
\label{null_piece}
f'f^T_{| \text{im}(X) \cap \langle u_1, \dots , u_l \rangle^{\perp}} = 0
\end{equation}

Back in the domain of the linear map $X$, the operator $f'f^TX$ can again be described in terms of its action on two complementary subspaces:

\begin{align}
f'f^T X_{| X^{-1}(\langle u_1, \dots , u_l \rangle)} &= X_{| X^{-1}(\langle u_1, \dots , u_l \rangle)} \label{identity_piece_in_Rn} \\
f'f^T X_{| X^{-1}(\text{im}(X) \cap \langle u_1, \dots , u_l \rangle^{\perp})} &= 0  \nonumber \\\label{null_piece_in_Rn}
\end{align}
This will be used in Proposition~\ref{lazy_fixed_rank_approximations}. 

Note that in the case $\widehat{I}=\text{im}(X)$, the restriction of the function in (\ref{null_piece}) is over a set that contains only the $0$ vector, and so we obtain an exact approximation
\begin{equation} \label{subcover_approx}
f^\prime f^T X = X
\end{equation}
Otherwise we have an inexact approximation because, via (\ref{null_piece_in_Rn}), we have
\begin{align}
f'f^T X_{| X^{-1}(\text{im}(X) \cap \langle u_1, \dots , u_l \rangle^{\perp})} = 0  \nonumber \\
\quad \quad \neq X_{| X^{-1}(\text{im}(X) \cap \langle u_1, \dots , u_l \rangle^{\perp})}  
\end{align}

\subsubsection{Theoretical results} 

Proposition \ref{lazy_fixed_rank_approximations} states that AMD and Lazy AMD construct identical approximators $\widehat{X}$.

\begin{Proposition}  \label{lazy_fixed_rank_approximations} \emph{(Lazy AMD)}
Let $\{u_1, \dots , u_l\}$ be a linearly independent collection such that $\langle u_1, \dots , u_l \rangle \subseteq \text{im}(X)$. Let  $\{q_1, \dots q_l\}$ be an orthonormalization of that collection.   Let $Q$ be the matrix whose $i^{\text{th}}$ column is $q_i$, $U$ be the matrix whose $i^{\text{th}}$ column is $u_i$, and $U'$ be the matrix defined as in~(\ref{U'}). Then
\begin{equation}
QQ^TX = U' U^T X
\end{equation}
\end{Proposition}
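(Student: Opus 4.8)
The plan is to recognize that both $QQ^T$ and $U'U^T = f'f^T$ are matrix representations of one and the same linear operator on $\R^m$, namely the orthogonal projection onto the subspace $\widehat{\mathcal{I}} = \langle u_1, \dots, u_l \rangle$, and then to invoke the uniqueness of that projection. Since the desired identity $QQ^TX = U'U^TX$ follows immediately from the stronger operator equality $QQ^T = U'U^T$ by right-multiplying both sides by $X$, it suffices to establish the latter, and the map $X$ plays no essential role.

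First I would record the characterization I intend to use: relative to the orthogonal decomposition $\R^m = \widehat{\mathcal{I}} \oplus \widehat{\mathcal{I}}^{\perp}$, a linear operator $P$ on $\R^m$ equals the orthogonal projection onto $\widehat{\mathcal{I}}$ precisely when $P_{|\widehat{\mathcal{I}}} = id_{\widehat{\mathcal{I}}}$ and $P_{|\widehat{\mathcal{I}}^{\perp}} = 0$. These two conditions specify $P$ on a pair of complementary subspaces and therefore determine $P$ on all of $\R^m$, so any two operators satisfying them coincide.

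Next I would verify that each of $f'f^T$ and $QQ^T$ meets both conditions. For $f'f^T$, the identity-on-$\widehat{\mathcal{I}}$ condition is exactly (\ref{downup}). For the vanishing condition, I would argue directly from the definition (\ref{fT}): any $u \in \widehat{\mathcal{I}}^{\perp}$ satisfies $f^T(u) = (u_1^T u, \dots, u_l^T u) = 0$, whence $f'f^T(u) = f'(0) = 0$. For $QQ^T$, orthonormality of the columns of $Q$ (which span $\widehat{\mathcal{I}}$) gives $Q^TQ = I_l$, so $QQ^T$ fixes every element of $\widehat{\mathcal{I}}$, while $Q^T u = 0$ for $u \in \widehat{\mathcal{I}}^{\perp}$ gives $QQ^T_{|\widehat{\mathcal{I}}^{\perp}} = 0$. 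Both operators thus equal the orthogonal projection onto $\widehat{\mathcal{I}}$, and the conclusion follows.

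I do not anticipate a genuine obstacle here; the one point that warrants care is that $f'f^T$ must annihilate all of $\widehat{\mathcal{I}}^{\perp}$, not merely the portion $\text{im}(X) \cap \widehat{\mathcal{I}}^{\perp}$ isolated in (\ref{null_piece}). This strengthening is immediate from the definition of $f^T$ and does not use the hypothesis $\widehat{\mathcal{I}} \subseteq \text{im}(X)$, so it causes no difficulty. As an alternative that sidesteps the projection characterization entirely, one could instead evaluate both operators on an arbitrary $u = u_a + u_b$ with $u_a \in \widehat{\mathcal{I}}$ and $u_b \in \widehat{\mathcal{I}}^{\perp}$, and check that each returns $u_a$.
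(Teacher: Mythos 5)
Your proof is correct, but it takes a genuinely different route from the paper's. The paper works in the domain $\R^n$: it invokes (\ref{identity_piece_in_Rn}) and (\ref{null_piece_in_Rn}) to show that $QQ^TX$ and $U'U^TX$ both restrict to $X$ on $X^{-1}(\langle u_1, \dots, u_l\rangle)$ and to $0$ on $X^{-1}(\text{im}(X) \cap \langle u_1, \dots , u_l \rangle^{\perp})$, and concludes equality because these two preimage subspaces together span $\R^n$ --- a fact that depends on the hypothesis $\widehat{\mathcal{I}} \subseteq \text{im}(X)$. You instead work in the codomain $\R^m$ and prove the strictly stronger operator identity $QQ^T = U'U^T$, identifying both as the orthogonal projection onto $\widehat{\mathcal{I}}$ via the decomposition $\R^m = \widehat{\mathcal{I}} \oplus \widehat{\mathcal{I}}^{\perp}$; the map $X$ enters only at the end, by right-multiplication. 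Your key upgrade --- correctly flagged as the one point needing care --- is that $f'f^T$ annihilates \emph{all} of $\widehat{\mathcal{I}}^{\perp}$, not merely $\text{im}(X) \cap \widehat{\mathcal{I}}^{\perp}$ as recorded in (\ref{null_piece}); this is immediate from (\ref{fT}) since each $u_i^T u = 0$ there. What your approach buys: a stronger conclusion (the projectors themselves coincide, independent of $X$), no reliance on $\widehat{\mathcal{I}} \subseteq \text{im}(X)$, and a more modular argument resting on the standard uniqueness of a projection determined on complementary subspaces. What the paper's approach buys: it reuses exactly the machinery ((\ref{identity_piece_in_Rn}), (\ref{null_piece_in_Rn})) that the paper develops to explain how the approximation acts on the sample space $\R^n$, which is the viewpoint the rest of the paper (Propositions 2 and 3) builds on.
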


\begin{proof} Since $\langle q_1, \dots , q_l \rangle = \langle u_1, \dots, u_l \rangle$, it follows from (\ref{identity_piece_in_Rn}) and (\ref{null_piece_in_Rn}) that
\begin{align*}
QQ^T &X_{| X^{-1}(\langle u_1, \dots , u_l \rangle)}  = X_{| X^{-1}(\langle u_1, \dots , u_l \rangle)} \\
 & = U'U^T X_{| X^{-1}(\langle u_1, \dots , u_l \rangle)}
\end{align*}
and
\begin{align*}
QQ^T &X_{| X^{-1}(\text{im}(X) \cap \langle u_1, \dots , u_l \rangle^{\perp})} = 0 \\
 & = U'U^T X_{| X^{-1}(\text{im}(X) \cap \langle u_1, \dots , u_l \rangle^{\perp})}
\end{align*}

Hence $QQ^TX = U' U^T X$.
\end{proof}

Thus, we may now generalize Theorem 1.1 of \cite{halko} and (\ref{spca_error}) to a more general class of approximators $\widehat{X}$ without incurring additional error.\footnote{Corollary \ref{Corollary}, unlike the original theorem, assumes that $U$ is full rank.   In actuality, the entire framework holds even when the collection $\{u_1, \hdots, u_l\}$ is linearly dependent, as we show in a future paper.  For now, we show that, in any case $U = X\Omega$ is full \text{rank} with probability 1.  \begin{Lemma} Let $X$ and $\Omega$ be as in Corollary \ref{Corollary}. Then $U = X \Omega$ is full \text{rank} with probability 1.
\end{Lemma}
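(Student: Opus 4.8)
The plan is to prove this by a standard genericity argument: the set of projection matrices $\Omega$ for which $U = X\Omega$ fails to have full rank is contained in the zero locus of a nonzero polynomial, hence has Lebesgue measure zero, and therefore probability zero under the (absolutely continuous) Gaussian law on the entries of $\Omega$.

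First I would fix notation. With $\Omega \in \R^{n \times l}$ having i.i.d.\ $N(0,1)$ entries, we have $U = X\Omega \in \R^{m \times l}$, and \emph{full rank} means $\text{rank}(U) = l$; this target is meaningful precisely because the ambient SPCA setup has $l \leq \text{rank}(X)$. The entries of $U$ are linear, hence polynomial, functions of the $nl$ entries of $\Omega$, so every $l \times l$ minor of $U$ is a polynomial in those entries. The event $\{\text{rank}(U) < l\}$ is exactly the simultaneous vanishing of all such minors, and in particular is contained in the zero set $\{p_0 = 0\}$ of any single chosen minor $p_0$.

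The key step --- and the only place the hypothesis $\text{rank}(X) \geq l$ is used --- is to show that at least one minor $p_0$ is not the identically-zero polynomial. For this I would exhibit a single witness $\Omega_0$ with $\text{rank}(X\Omega_0) = l$: since $\dim \text{im}(X) = \text{rank}(X) \geq l$, I can choose $v_1, \dots, v_l \in \R^n$ whose images $Xv_1, \dots, Xv_l$ are linearly independent, and take $\Omega_0$ to be the matrix with columns $v_1, \dots, v_l$. Then the columns of $X\Omega_0$ are linearly independent, so some $l \times l$ minor of $X\Omega_0$ is nonzero, which shows the corresponding polynomial $p_0$ does not vanish identically.

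Finally I would invoke the standard fact that the zero set of a nonzero polynomial on $\R^{nl}$ has Lebesgue measure zero. Since $\{\text{rank}(U) < l\} \subseteq \{p_0 = 0\}$, this event has measure zero; and because the i.i.d.\ Gaussian entries endow $\Omega$ with a law that is absolutely continuous with respect to Lebesgue measure on $\R^{nl}$, we conclude $P(\text{rank}(U) < l) = 0$, i.e.\ $U$ is full rank with probability $1$. The main obstacle is the nonvanishing step: everything reduces to producing one good $\Omega_0$, which is where the rank condition on $X$ enters. Equivalently, one could argue inductively, using that $X\omega$ for Gaussian $\omega$ is a nondegenerate Gaussian on $\text{im}(X)$ and therefore avoids any proper subspace almost surely, but the polynomial argument is cleaner to state.
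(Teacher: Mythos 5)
Your proof is correct, but it takes a genuinely different route from the paper's. You argue via genericity: the failure event $\{\text{rank}(X\Omega) < l\}$ is contained in the zero locus of a single $l \times l$ minor, a polynomial in the entries of $\Omega$; you show this polynomial is not identically zero by exhibiting a witness $\Omega_0$ (this is where $\text{rank}(X) \geq l$ enters), and conclude by absolute continuity of the Gaussian law that the zero locus has probability zero. The paper instead argues sequentially, column by column: assuming the first $k < l$ columns are linearly independent, their span is a proper subspace of $\R^n$ and hence has measure zero, so the $(k+1)$st column avoids it almost surely. Note, however, that the paper's argument as written concerns vectors in $\R^n$, i.e.\ the columns of $\Omega$, and so it literally establishes only that $\Omega$ itself is full rank almost surely; it never addresses the passage through $X$, which is precisely where a hypothesis like $\text{rank}(X) \geq l$ is needed (indeed, if $\text{rank}(X) < l$ the lemma is false, since $\text{rank}(X\Omega) \leq \text{rank}(X) < l$ surely). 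Your proof isolates and uses that hypothesis explicitly, so in this respect it is more careful than the paper's; it also generalizes immediately to any law on $\Omega$ that is absolutely continuous with respect to Lebesgue measure. The paper's sequential idea, properly repaired (condition on the previously drawn columns and note that $Xv_{k+1}$ is a nondegenerate Gaussian on $\text{im}(X)$, which avoids the proper subspace $\langle Xv_1,\dots,Xv_k\rangle$ of $\text{im}(X)$ almost surely when $k < \text{rank}(X)$), is essentially the inductive alternative you sketch in your final sentence.
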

\begin{proof} Recall that $l \leq \min \{m, n\}$. Assume the first $k < l$ vectors $v_1, \dots , v_k$ have been chosen and they are linearly independent. Then the subspace $\langle v_1, \dots , v_k \rangle$ has measure 0 in $\mathbb{R}^n$. So $P(v_{k+1} \text{ is linearly dependent on } v_1, \dots , v_k) =P(v_{k+1} \in \langle v_1, \dots , v_k \rangle) = 0$.
\end{proof} }  



\begin{Corollary} \label{Corollary} Let $X$ be a real $m \times n$ non-trivial matrix and $\Omega$ be a $n \times l$ Gaussian random projection matrix where $ 2 \leq l \leq \text{min} \{m,n\}$. Suppose $U = X \Omega$ is full \text{rank} and $U^\prime$ has been constructed as in (\ref{U'}). Then for any $k \leq \text{min}\{l, rank(X)\}$,
\begin{equation}\label{Lazy SPCAerror}
\mathbb{E}||X - U'U^TX|| \leq \bigg[ 1+ \df{4\sqrt{l}}{l-k-1} \cdot \sqrt{min \{m,n\}} \bigg] \sigma_{k+1}
\end{equation}
where $\mathbb{E}$ denotes expectation and $\sigma_{k+1}$ denotes the $(k+1)^{\text{st}}$ singular value of $X$.
\end{Corollary}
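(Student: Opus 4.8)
The plan is to reduce the claim to the already-established orthonormal bound in (\ref{spca_error}) by invoking Proposition~\ref{lazy_fixed_rank_approximations}, so that essentially no new estimation is required. First I would observe that the columns $u_1, \dots, u_l$ of $U = X\Omega$ all lie in $\text{im}(X)$, since the $i$th column is $u_i = X\omega_i$ where $\omega_i$ is the $i$th column of $\Omega$. Because $U$ is assumed full rank, these columns are linearly independent, so the collection $\{u_1, \dots, u_l\}$ satisfies exactly the hypotheses of Proposition~\ref{lazy_fixed_rank_approximations}: it is a linearly independent family spanning a subspace $\widehat{\mathcal{I}} = \langle u_1, \dots, u_l \rangle \subseteq \text{im}(X)$.

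Next I would orthonormalize the columns of $U$ (e.g.\ by a QR or Gram--Schmidt step) to obtain a matrix $Q$ with orthonormal columns satisfying $\langle q_1, \dots, q_l \rangle = \langle u_1, \dots, u_l \rangle$. Proposition~\ref{lazy_fixed_rank_approximations} then delivers the exact matrix identity $QQ^TX = U'U^TX$. This is a deterministic equality holding for every realization of $\Omega$ for which $U$ is full rank (which, as the footnote lemma records, is almost every realization), so in particular $\|X - U'U^TX\| = \|X - QQ^TX\|$ pointwise, and hence $\mathbb{E}\|X - U'U^TX\| = \mathbb{E}\|X - QQ^TX\|$.

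Finally I would apply the orthonormal bound. By construction $Q$ is an orthonormal basis for $\text{range}(X\Omega)$ built from a Gaussian random projection, so it is precisely the object to which Theorem~1.1 of \cite{martinsson} applies, and (\ref{spca_error}) gives
\[
\mathbb{E}\|X - QQ^TX\| \leq \left[1 + \frac{4\sqrt{l}}{l-k-1}\sqrt{\min\{m,n\}}\right]\sigma_{k+1}.
\]
Chaining this inequality with the equality of expectations from the previous step yields (\ref{Lazy SPCAerror}), completing the argument.

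Since the whole proof is a transfer argument, the main obstacle is not any analytic estimate but making the bridge to the cited theorem airtight. I would need to confirm that the $Q$ obtained by orthonormalizing $U = X\Omega$ is exactly the orthonormal basis of the sample range for which the original bound is stated, and that the parameter regime $2 \leq l \leq \min\{m,n\}$ together with $k \leq \min\{l, \text{rank}(X)\}$ is compatible with the constraints under which (\ref{spca_error}) is valid (in particular $l-k-1>0$, so that the stated factor is meaningful). Care is also warranted because the cited bound is proved only for Gaussian $\Omega$; the corollary therefore inherits that same Gaussian restriction, even though Proposition~\ref{lazy_fixed_rank_approximations} itself is basis-independent and indifferent to how $\widehat{\mathcal{I}}$ was generated.
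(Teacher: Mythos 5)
Your proposal is correct and is essentially the paper's own proof: the paper disposes of the corollary in one line by citing Proposition~\ref{lazy_fixed_rank_approximations} together with the Gaussian error bound of Halko--Martinsson--Tropp (the bound stated in (\ref{spca_error})), which is exactly the transfer argument you spell out. Your additional care about the full-rank event and the parameter constraints only makes explicit what the paper leaves implicit.
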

\begin{proof} The statement follows immediately from Proposition~\ref{lazy_fixed_rank_approximations} and Theorem 1.1 of \cite{halko}.
\end{proof}

\subsection{Lazy SPCA}  \label{SASVD}




\subsubsection{Procedures}

SPCA constructs $S = \langle q_1, \dots , q_l \rangle$ to approximate $\text{im}(X)$ where the $\{q_1, \dots , q_l\}$ are orthonormal. Thus the low-rank matrix approximation described in  Section~\ref{low_rank_matrix_approximations} is given by  $X_{m \times n} \approx Q_{m \times l} Q^T_{l \times m} X_{m \times n}$. Using this, one obtains an approximate SVD.\footnote{We use a tilde to reflect a temporary computational byproduct, and we use superscripts $s$ and $\ell$ to refer to factors relevant to SPCA and Lazy SPCA, respectively.}
\begin{align}
X_{m \times n} & \approx Q_{m \times l} Q^T_{l \times m} X_{m \times n}  \nonumber \\
 & = Q_{m \times l} \tilde{U}_{l \times l} \Sigma^s_{l \times l} V^{sT}_{l \times n}  \text{ (compact SVD)} \nonumber  \\
 & = U^s_{m \times l} \Sigma^s_{l \times l} V^{sT}_{l \times n}.  \label{approx_svd}
\end{align}

Using Lazy AMD, we construct $\widehat{I} = \langle u_1, \dots , u_l \rangle$ to approximate $\text{im}(X)$ without orthonormalizing the $\{u_1, \dots , u_l\}$.  By Proposition \ref{lazy_fixed_rank_approximations}, we obtain an equally good approximation $X \approx U'U^TX$. From this approximation we can obtain the same approximate SVD as before, but with an alternate pathway: 

\begin{align}
X_{m \times n} & \approx U'_{m \times l} U^T_{l \times m} X_{m \times n} \nonumber  \\
 & = U'_{m \times l} \tilde{U}_{l \times l} \tilde{\Sigma}_{l \times l} V^{\ell T}_{l \times n} \text{ (compact SVD)} \nonumber  \\
 & = U^s_{m \times l} \Sigma^s_{l \times l} \tilde{V}^{T}_{l \times l} V^{\ell T}_{l \times n} \text{ (compact SVD)} \nonumber  \\
 & = U^s_{m \times l} \Sigma^s_{l \times l} V^{sT}_{l \times n} \label{simplified_approx_svd}
\end{align}

By the uniqueness of SVD for  $Q_{m \times l} Q^T_{l \times m} X_{m \times n} = U'_{m \times l} U^T_{l \times m} X_{m \times n} $, the approximate SVDs in the final lines of (\ref{approx_svd})  and (\ref{simplified_approx_svd}) are identical.  However, as we show in Propositions 2 and 3, we may use $V^\ell$, an intermediate product of pathway (\ref{simplified_approx_svd}), instead of $V^s$, to perform dimensional reduction.  We call this idea \emph{premature truncation}.  This  saves a computation (either a  QR in (\ref{approx_svd}) or SVD in (\ref{simplified_approx_svd})) that is expensive and can be cumbersome for distributed computation.\footnote{Also note that, for the purposes of dimensionality reduction, we never need to explicitly form the matrix $U'$.}

\subsubsection{Projecting samples to $\widehat{\mathcal{P}}$} \label{subspaces}


 We define the dimensionality reduction maps 
\begin{align}
\pi^s & : \mathbb{R}^n \rTo \mathbb{R}^l, x \mapsto V^{sT} x \nonumber \\
\pi^\ell &: \mathbb{R}^n \rTo \mathbb{R}^l, x \mapsto V^{\ell T} x \label{dim_reduction_maps}
\end{align}
where $\pi^s$ is the the mapping formed by SPCA using $V^s$ in (\ref{approx_svd}) and $\pi^\ell$ is the mapping formed by Lazy SPCA using $V^\ell$ in (\ref{simplified_approx_svd}). Here we show that both $\pi^\ell$ and $\pi^s$ project samples into the same subspace $\widehat{\mathcal{P}}$ of $\R^n$.\footnote{The term \quotes{project} is used loosely here.  More precisely, since $V^T=V^TVV^T$, we can consider these maps as projecting samples onto $\widehat{\mathcal{P}}$ and then identifying $\widehat{\mathcal{P}}$ with $\R^l$ by using the approximate dominant principal component directions as a basis for $\R^l$.}${}^{,}$\footnote{Note, that the two methods will not, in general, find the same basis for $\widehat{\mathcal{P}}$.} 



\begin{Proposition}\label{approximatesingularvectorsspanthesame} 
The approximate right singular vectors of $X$ given by either the columns of $V^s$ (for SPCA) or $V^\ell$ (for Lazy SPCA) form an orthonormal basis for $\langle X^T(u_1), \dots X^T(u_l) \rangle$ in $\text{im}(X^T)$.
\end{Proposition}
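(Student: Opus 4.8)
The plan is to handle $V^s$ and $V^\ell$ together. By Proposition~\ref{lazy_fixed_rank_approximations} the two pathways (\ref{approx_svd}) and (\ref{simplified_approx_svd}) factor the same operator, and in each case the relevant matrix of approximate right singular vectors is the $V$-factor of a compact SVD: $V^s$ comes from the compact SVD of $QQ^TX$, while $V^\ell$ comes from the compact SVD of $U^TX$ appearing as the intermediate product in (\ref{simplified_approx_svd}). In either case the columns are right singular vectors of a compact SVD, hence orthonormal by construction, so the only substantive task is to identify their span. For this I would invoke the standard fact that the right singular vectors attached to nonzero singular values --- that is, all columns produced by a compact SVD --- form an orthonormal basis of the row space of the matrix, equivalently of $\text{im}(M^T)$ for the matrix $M$ being decomposed.

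Applying this, the columns of $V^s$ span the row space of $QQ^TX$, namely $\text{im}\big((QQ^TX)^T\big) = \text{im}(X^TQQ^T)$. Since $\{q_1,\dots,q_l\}$ is an orthonormalization of $\{u_1,\dots,u_l\}$, the matrix $QQ^T$ is the orthogonal projection of $\R^m$ onto $\widehat{\mathcal{I}} = \langle u_1,\dots,u_l\rangle$, whose image is exactly $\widehat{\mathcal{I}}$; hence $\text{im}(X^TQQ^T) = X^T(\widehat{\mathcal{I}}) = \langle X^T(u_1),\dots,X^T(u_l)\rangle$, the final equality being linearity of $X^T$. Symmetrically, the columns of $V^\ell$ span the row space of $U^TX$, namely $\text{im}(X^TU) = X^T(\text{im}(U)) = X^T(\widehat{\mathcal{I}})$, since the columns of $U$ are the $u_i$ and so $\text{im}(U) = \widehat{\mathcal{I}}$. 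Thus both sets of columns span the single subspace $\langle X^T(u_1),\dots,X^T(u_l)\rangle$, which lies inside $\text{im}(X^T)$ because each $X^T(u_i)$ does.

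Finally I would confirm that each factor genuinely has $l$ columns, so that these orthonormal spanning sets are honest bases consistent with the notation. Because $\widehat{\mathcal{I}} \subseteq \text{im}(X)$ and $X^T$ restricted to $\text{im}(X)$ is injective (it maps $\text{im}(X)$ bijectively onto $\text{im}(X^T)$), the images $X^T(u_1),\dots,X^T(u_l)$ of the linearly independent $u_i$ remain linearly independent, so $\dim\langle X^T(u_1),\dots,X^T(u_l)\rangle = l$ and both $QQ^TX$ and $U^TX$ have $\text{rank}$ $l$; their compact SVDs therefore yield exactly $l$ right singular vectors. The main obstacle is precisely this translation between the matrix (SVD) picture and the subspace picture: one must argue carefully that $QQ^T$ is the orthogonal projector onto $\widehat{\mathcal{I}}$, that a compact SVD retains only nonzero singular values so that its right singular vectors span the full row space and no more, and that the inclusion $\widehat{\mathcal{I}} \subseteq \text{im}(X)$ keeps all the relevant dimensions exactly equal to $l$. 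Once these points are in place, the identification of the span and the orthonormality of $V^s$ and $V^\ell$ follow immediately.
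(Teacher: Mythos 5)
Your proof is correct and takes essentially the same route as the paper's: both arguments come down to identifying the span of the columns of $V^s$ (resp.\ $V^\ell$) with the row space of the decomposed factor, namely $\text{im}(X^TQ)=\text{im}(X^TU)=X^T(\widehat{\mathcal{I}})=\langle X^T(u_1),\dots,X^T(u_l)\rangle$, the only difference being that you cite the standard compact-SVD/row-space fact while the paper re-derives it inline (exhibiting each $v_i$ as $X^TQ$ applied to $\sigma_i^{-1}\tilde{u}_i$, then counting dimensions). Your explicit check that both factors have rank exactly $l$, so that every retained singular value is nonzero, is a detail the paper's proof leaves implicit when it writes $\sigma_i^{-1}$.
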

\begin{proof} At the compact SVD step $Q^T_{l \times m} X_{m \times n} = \tilde{U}_{l \times l} \Sigma^s_{l \times l} V^{sT}_{l \times n}$ of (\ref{approx_svd}), write $\Sigma^s= \text{diag}(\sigma_1,\hdots,\sigma_l)$ and 
$$\tilde{U} = \left( \begin{array}{ccc} u_{11} & \dots & u_{1l} \\ \vdots & \ddots & \vdots \\ u_{l1} & \dots & u_{ll} \end{array} \right), \; V^s = \left( \begin{array}{ccc} v_{11} & \dots & v_{1l} \\ \vdots & \ddots & \vdots \\ v_{n1} & \dots & v_{nl} \end{array} \right)$$

Then $X^TQ(u_i) = V^s \Sigma^s \tilde{U}^{T}(\sigma_i^{-1} u_i) = V^s \Sigma^s(\sigma_i^{-1} e_i) = V^s(e_i)  = v_i.$  Hence $v_1, \dots , v_l  \in \text{im}(X^T Q) = \langle X^T Q(e_1), \dots , X^T Q(e_l) \rangle = \langle X^T(q_1), \dots , X^T(q_l) \rangle$.   By dimension count, $\langle v_1, \dots , v_l \rangle = \langle X^T(q_1), \dots , X^T(q_l) \rangle$.

Similarly, at the compact SVD step $U^T_{l \times m} X_{m \times n} = \tilde{U}_{l \times l} \tilde{\Sigma}_{l \times l} V^{\ell T}_{l \times n}$ of (\ref{simplified_approx_svd}), write $\tilde{\Sigma}= \text{diag}(\sigma_1,\hdots,\sigma_l)$ and 
$$\tilde{U} = \left( \begin{array}{ccc} u_{11} & \dots & u_{1l} \\ \vdots & \ddots & \vdots \\ u_{l1} & \dots & u_{ll} \end{array} \right), \; V^\ell = \left( \begin{array}{ccc} v_{11} & \dots & v_{1l} \\ \vdots & \ddots & \vdots \\ v_{n1} & \dots & v_{nl} \end{array} \right)$$

Then $X^TU(u_i) = V^\ell \tilde{\Sigma} \tilde{U}^{T}(\sigma_i^{-1} u_i) = V^\ell \tilde{\Sigma}(\sigma_i^{-1} e_i) = V^1(e_i)  = v_i.$  Hence $v_1, \dots , v_l  \in \text{im}(X^TU) = \langle X^TU(e_1), \dots , X^T U(e_l) \rangle = \langle X^T(u_1), \dots , X^T(u_l) \rangle$.   By dimension count, $\langle v_1, \dots , v_l \rangle = \langle X^T(u_1), \dots , X^T(u_l) \rangle$.

Since $\langle q_1, \dots , q_l \rangle = \langle u_1, \dots , u_l \rangle$, it follows that $\langle X^T(q_1), \dots , X^T(q_l) \rangle = X^T(\langle q_1, \dots , q_l \rangle) = X^T(\langle u_1, \dots , u_l \rangle) = \langle X^T(u_1), \dots , X^T(u_l) \rangle$ and we are done.
\end{proof}


\subsubsection{Pairwise distances after dimensionality reduction}  \label{distances}

Here we show that, while dimensionality reduction will often shrink pairwise distances between samples, the resulting distances will be invariant to whether SPCA is executed lazily or not. 


Let  $V = \left( \begin{array}{ccc} v_{11} & \dots & v_{1l} \\ \vdots & \ddots & \vdots \\ v_{n1} & \dots & v_{nl} \end{array} \right) $ be a matrix whose columns are a collection of orthonormal vectors $\{v_1, \dots , v_l\}$ in $\mathbb{R}^n$ and consider the linear map
\[ V^T: \R^n \to \R^l, v \mapsto V^T v = (V^T_1v, \dots , V^T_lv) \] 

\begin{Lemma} \label{normbeforeandafterbyorthonormal} (norm before and after transformation by orthonormal $V$) Suppose $\{v_1, \dots , v_l\}$ is orthonormal. If $v \in \langle v_1, \dots , v_l \rangle$ then $||V^T v|| = ||v||$, else $||V^T v|| < ||v||$.
\end{Lemma}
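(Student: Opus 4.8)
The plan is to exploit the orthogonal decomposition $\mathbb{R}^n = \langle v_1, \dots , v_l \rangle \oplus \langle v_1, \dots , v_l \rangle^{\perp}$, which is already the recurring tool of this section, and to track the action of $V^T$ on each summand separately. So I would write $v = v_a + v_b$ with $v_a \in \langle v_1, \dots , v_l \rangle$ and $v_b \in \langle v_1, \dots , v_l \rangle^{\perp}$, and then show that $V^T$ annihilates the complement while acting as an isometry on the span.

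First I would observe that $V^T$ kills the orthogonal complement: for $v_b \perp \langle v_1, \dots , v_l \rangle$ each coordinate $v_i^T v_b$ vanishes, so $V^T v_b = 0$ and hence $V^T v = V^T v_a$. Next, writing $v_a = \sum_{i=1}^l \alpha_i v_i$, orthonormality gives $(V^T v_a)_j = v_j^T v_a = \sum_{i=1}^l \alpha_i (v_j^T v_i) = \alpha_j$, so that $\|V^T v\|^2 = \|V^T v_a\|^2 = \sum_{i=1}^l \alpha_i^2$. The same orthonormality gives $\|v_a\|^2 = \sum_{i=1}^l \alpha_i^2$, and therefore $\|V^T v\| = \|v_a\|$ in all cases.

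The two-case conclusion then falls out of the Pythagorean identity $\|v\|^2 = \|v_a\|^2 + \|v_b\|^2$, valid since $v_a \perp v_b$. If $v \in \langle v_1, \dots , v_l \rangle$ then $v_b = 0$ and $\|V^T v\| = \|v_a\| = \|v\|$; otherwise $v_b \neq 0$, so $\|v\|^2 = \|v_a\|^2 + \|v_b\|^2 > \|v_a\|^2$, whence $\|V^T v\| = \|v_a\| < \|v\|$.

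There is no deep obstacle here; the computation is elementary. The only points that need care are, first, that orthonormality (not mere orthogonality) is used twice — once so that $V^T v_a$ recovers the coefficients $\alpha_j$ exactly, and once so that $\|v_a\|^2 = \sum \alpha_i^2$ — and second, that the strict inequality in the second case rests on $v_b \neq 0$ holding \emph{precisely} when $v \notin \langle v_1, \dots , v_l \rangle$, which is the uniqueness of the orthogonal decomposition.
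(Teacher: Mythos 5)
Your proof is correct and follows essentially the same route as the paper's: both decompose $v$ into its component in $\langle v_1,\dots,v_l\rangle$ plus a component in the orthogonal complement, note that $V^T$ annihilates the complement and acts as an isometry on the span, and obtain the strict inequality from the Pythagorean identity. The only cosmetic difference is that you verify the isometry by expanding $v_a$ in the orthonormal basis and reading off coordinates, while the paper uses the identity $\|V^Tv'\|^2 = v'^T V V^T v'$ together with $VV^T$ acting as the identity on the span; these are interchangeable one-line computations.
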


\begin{proof} If $v \in \langle v_1, \dots , v_l \rangle$ then $||V^T v||^2  = v^TVV^T v = V^T id_{\langle v_1, \dots , v_l \rangle} v  = V^T v = ||v||^2$.  Else write $v = v' + v''$ for some $v' \in \langle v_1, \dots , v_l \rangle$ and nonzero $v'' \in \langle v_1, \dots , v_l \rangle^{\perp}$. Then $||V^T v||^2 = ||V^T(v' + v'')||^2 = ||V^Tv'||^2 = v'^T VV^T v'  = v'^T id_{\langle v_1, \dots , v_l \rangle} v'  = v'^T v'  = ||v'||^2  < ||v'||^2 + ||v''||^2 = ||v' + v''||^2 = ||v||^2$.
\end{proof}

\begin{Corollary} \label{distancebeforeandafterbyorthonormal} (distance before and after transformation by orthonormal $V$) Suppose $\{v_1, \dots , v_l\}$ is orthonormal. If $v_i - v_j \in \langle v_1 , \dots , v_l \rangle$ then $d(V^T v_i, V^T v_j) = d(v_i, v_j)$, else  $d(V^T v_i, V^T v_j) < d(v_i, v_j)$.
\end{Corollary}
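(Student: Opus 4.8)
The plan is to deduce this immediately from Lemma~\ref{normbeforeandafterbyorthonormal} by observing that $V^T$ is linear and that $d$ is the Euclidean distance induced by the norm, so $d(a,b) = \|a - b\|$ for any $a,b$. First I would write
\[
d(V^T v_i, V^T v_j) = \|V^T v_i - V^T v_j\| = \|V^T(v_i - v_j)\|,
\]
where the last equality uses linearity of $V^T$. This collapses the statement about distances between two transformed points into a statement about the norm of a single transformed vector, namely the transform of the difference $w := v_i - v_j$.

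Next I would apply Lemma~\ref{normbeforeandafterbyorthonormal} with $w$ in place of $v$. The Lemma's dichotomy is governed by whether $w \in \langle v_1, \dots, v_l \rangle$, which is precisely the condition splitting the two cases of the Corollary. In the first case, $w = v_i - v_j \in \langle v_1, \dots, v_l \rangle$ gives $\|V^T w\| = \|w\|$, hence $d(V^T v_i, V^T v_j) = \|w\| = d(v_i, v_j)$. In the second case, $w \notin \langle v_1, \dots, v_l \rangle$ gives $\|V^T w\| < \|w\|$, hence $d(V^T v_i, V^T v_j) < d(v_i, v_j)$. This matches the claimed equality/strict-inequality split exactly.

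I do not expect any real obstacle here, since the Corollary is essentially a restatement of the Lemma applied to difference vectors. The only point requiring care is the notational clash: the symbols $v_i, v_j$ in the Corollary denote arbitrary sample points in $\mathbb{R}^n$, not members of the orthonormal collection $\{v_1, \dots, v_l\}$ that form the columns of $V$; one must read the span condition $v_i - v_j \in \langle v_1, \dots, v_l \rangle$ accordingly. With that understood, linearity together with the Lemma does all the work.
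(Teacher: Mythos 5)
Your proof is correct and matches the paper's own argument exactly: both reduce $d(V^T v_i, V^T v_j)$ to $\|V^T(v_i - v_j)\|$ by linearity and then invoke Lemma~\ref{normbeforeandafterbyorthonormal} on the difference vector. Your remark on the notational clash (the $v_i, v_j$ in the Corollary being arbitrary points, not columns of $V$) is a helpful clarification the paper leaves implicit.
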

\begin{proof} This follows from the fact $d(V^T v_i, V^T v_j) = ||V^T v_i - V^T v_j|| = ||V^T (v_i - v_j)||$ and Lemma~\ref{normbeforeandafterbyorthonormal}.
\end{proof}

 \begin{Proposition} \label{same_distances} Suppose $\pi^s$ and $\pi^\ell$ are the dimensionality reduction maps for SPCA and Lazy SPCA, as described in (\ref{dim_reduction_maps}). Then for all $v_i, v_j \in \R^n$,
 \[  d(\pi^s v_i, \pi^s v_j) = d(\pi^\ell v_i, \pi^\ell v_j)  \leq d (v_i, v_j) \]
\end{Proposition}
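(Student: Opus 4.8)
The plan is to reduce everything to two facts already established: Proposition~\ref{approximatesingularvectorsspanthesame}, which states that the columns of $V^s$ and of $V^\ell$ are \emph{both} orthonormal bases of the \emph{same} subspace $\widehat{\mathcal{P}} = \langle X^T(u_1), \dots , X^T(u_l) \rangle$, and Corollary~\ref{distancebeforeandafterbyorthonormal}, which controls how an orthonormal map distorts distances. The inequality $d(\pi^s v_i, \pi^s v_j) \leq d(v_i, v_j)$ is then immediate: since the columns of $V^s$ are orthonormal, Corollary~\ref{distancebeforeandafterbyorthonormal} applies verbatim with $V = V^s$, giving $d(V^{sT} v_i, V^{sT} v_j) \leq d(v_i, v_j)$, that is, $d(\pi^s v_i, \pi^s v_j) \leq d(v_i, v_j)$. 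The identical argument applies to $\pi^\ell$, so once the equality is in hand the full chain follows.

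The substance is the equality $d(\pi^s v_i, \pi^s v_j) = d(\pi^\ell v_i, \pi^\ell v_j)$. First I would observe that because $V^s$ and $V^\ell$ are orthonormal bases of the same $l$-dimensional subspace, they differ by an orthogonal change of basis. Setting $R := V^{sT} V^\ell \in \R^{l \times l}$, the projection identity $V^s V^{sT} V^\ell = V^\ell$, which holds because every column of $V^\ell$ lies in $\widehat{\mathcal{P}}$ and $V^s V^{sT}$ restricts to the identity on $\widehat{\mathcal{P}}$, gives $V^s R = V^\ell$. The same identity yields $R^T R = V^{\ell T}(V^s V^{sT}) V^\ell = V^{\ell T} V^\ell = I_l$, so $R$ is orthogonal.

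Then I would substitute directly. Writing $w := v_i - v_j$, we have $\pi^\ell v_i - \pi^\ell v_j = V^{\ell T} w = (V^s R)^T w = R^T (V^{sT} w)$, and since $R^T$ is orthogonal it preserves the Euclidean norm, so $\|V^{\ell T} w\| = \|R^T V^{sT} w\| = \|V^{sT} w\| = \|\pi^s v_i - \pi^s v_j\|$. This is exactly $d(\pi^\ell v_i, \pi^\ell v_j) = d(\pi^s v_i, \pi^s v_j)$, which together with the inequality above proves the proposition.

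I expect the only real obstacle to be the first step of the middle paragraph, namely cleanly justifying the orthogonal change of basis $V^\ell = V^s R$. Everything hinges on the projection identity $V^s V^{sT}|_{\widehat{\mathcal{P}}} = id_{\widehat{\mathcal{P}}}$, which is where Proposition~\ref{approximatesingularvectorsspanthesame} does the heavy lifting by guaranteeing that the two column spaces coincide; once that is in place, the remaining manipulations are routine norm computations of the kind already carried out in Lemma~\ref{normbeforeandafterbyorthonormal}.
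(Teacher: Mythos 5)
Your proof is correct, but it takes a genuinely different route from the paper's for the key equality. The paper argues by case decomposition: writing $v_i - v_j = v' + v''$ with $v' \in \widehat{\mathcal{P}}$ and $v'' \in \widehat{\mathcal{P}}^\perp$, it observes (via the computation in Lemma~\ref{normbeforeandafterbyorthonormal}) that both $\|V^{sT}(v_i - v_j)\|$ and $\|V^{\ell T}(v_i - v_j)\|$ equal the basis-independent quantity $\|v'\|$, so the two reduced distances coincide and are $\leq \|v_i - v_j\|$, with equality exactly when $v'' = 0$. You instead construct the explicit change-of-basis matrix $R = V^{sT}V^\ell$, verify $V^\ell = V^s R$ and $R^T R = I_l$ from the projection identity $V^sV^{sT}|_{\widehat{\mathcal{P}}} = id_{\widehat{\mathcal{P}}}$, and conclude $V^{\ell T}w = R^T V^{sT}w$, so the two maps differ by an isometry of the codomain $\R^l$. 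Both proofs lean on Proposition~\ref{approximatesingularvectorsspanthesame} as the essential input and on Corollary~\ref{distancebeforeandafterbyorthonormal} for the inequality, so the inputs are the same; what your version buys is a stronger structural conclusion, namely $\pi^\ell = R^T \circ \pi^s$ for a fixed orthogonal $R$, which immediately gives not only equal pairwise distances but equal inner products and angles, whereas the paper's version stays entirely within the norm computations it has already set up and never needs to introduce the matrix $R$. One small remark: the paper's own proof contains an apparent typo (the subspace is written $\langle X^TXv_1, \dots, X^TXv_l\rangle$ rather than $\langle X^Tu_1, \dots, X^Tu_l\rangle$), and your argument sidesteps that decomposition entirely.
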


\begin{proof}   By Proposition \ref{approximatesingularvectorsspanthesame}, the columns in $V^s$ and the columns in $V^\ell$ span the same subspace $\langle X^Tu_1, \dots , X^T u_l \rangle$.  Thus, if $v_i - v_j \in \langle X^T Xv_1, \dots , X^T Xv_l \rangle$ then by Corollary \ref{distancebeforeandafterbyorthonormal},
 \begin{equation*} \label{same_distances_1} 
 d(V^{sT} v_i, V^{sT} v_j) = d(V^{\ell T} v_i, V^{\ell T} v_j) = d(v_i, v_j)
 \end{equation*}
 Else write $v_i - v_j = v' + v''$ for some $v' \in \langle X^T Xv_1, \dots , X^T Xv_l \rangle$ and nonzero $v'' \in \langle X^T Xv_1, \dots , X^T Xv_l \rangle^{\perp}$ and 
 \begin{align*}
 d(V^{sT} v_i, V^{sT} v_j) & = d(V^{\ell T} v_i, V^{\ell T} v_j)  \nonumber \\
 &= ||v' || < ||v_i - v_j|| = d(v_i, v_j) \label{same_distances_2}
 \end{align*}
\end{proof}

\subsubsection{Computational complexity} Here we show that Lazy SPCA reduces the complexity of SPCA. 
 \begin{Proposition} SPCA has computational complexity $\mathcal{O}(\text{nnz}(X) l+ (m+n)l^2)$. Lazy SPCA is $\mathcal{O}(\text{nnz}(X)l+ nl^2)$.
\end{Proposition}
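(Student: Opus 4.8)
The plan is to decompose each algorithm into its constituent linear-algebra operations, assign the standard cost to each under the standing assumption $l \leq \min\{m,n\}$, and sum, tracking which steps generate the $m l^2$ term that distinguishes the two methods. I would invoke three cost facts: multiplying the sparse matrix $X$ (or $X^T$) against a dense matrix with $l$ columns costs $\mathcal{O}(\text{nnz}(X) l)$; orthonormalizing (e.g.\ QR) a dense $m \times l$ matrix costs $\mathcal{O}(m l^2)$; and a compact SVD of a dense $l \times n$ matrix with $l \leq n$ costs $\mathcal{O}(n l^2)$, which may equivalently be realized by forming an $l \times l$ Gram matrix and eigendecomposing it.

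For SPCA, following (\ref{approx_svd}), I would enumerate: (i) the random projection $U = X\Omega$, costing $\mathcal{O}(\text{nnz}(X) l)$; (ii) orthonormalizing $U$ into $Q$ by QR, costing $\mathcal{O}(m l^2)$; (iii) forming $Q^T X$, again $\mathcal{O}(\text{nnz}(X) l)$ since $X$ is sparse; (iv) the compact SVD of the $l \times n$ matrix $Q^T X$ to obtain $\Sigma^s$ and $V^s$, costing $\mathcal{O}(n l^2)$; and (v) lifting back via $U^s = Q \tilde{U}$, a dense $(m \times l)(l \times l)$ product costing $\mathcal{O}(m l^2)$. Summing these yields $\mathcal{O}(\text{nnz}(X) l + (m+n) l^2)$.

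For Lazy SPCA, following (\ref{simplified_approx_svd}) together with premature truncation, both the QR step and the lift back into $\R^m$ are dropped, because Propositions~\ref{approximatesingularvectorsspanthesame} and~\ref{same_distances} show that dimensionality reduction requires only $V^\ell$, never the factors living in $\R^m$ (and, per the earlier footnote, $U'$ need never be formed explicitly). The remaining steps are: (i) $U = X\Omega$, costing $\mathcal{O}(\text{nnz}(X) l)$; (ii) forming $B := U^T X$, costing $\mathcal{O}(\text{nnz}(X) l)$; and (iii) extracting $V^\ell$ from the $l \times n$ matrix $B$ via the small eigendecomposition route, namely forming the $l \times l$ Gram matrix $B B^T$ at cost $\mathcal{O}(n l^2)$, eigendecomposing it at cost $\mathcal{O}(l^3) \subseteq \mathcal{O}(n l^2)$ since $l \leq n$, and recovering $V^{\ell T} = \tilde{\Sigma}^{-1} \tilde{U}^T B$ at cost $\mathcal{O}(n l^2)$. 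Summing gives $\mathcal{O}(\text{nnz}(X) l + n l^2)$, the $m l^2$ term having vanished.

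The main obstacle is not the arithmetic but the justification that Lazy SPCA legitimately incurs no $\Theta(m l^2)$ work: I must argue that the only $m$-dependent steps of SPCA are precisely the QR orthonormalization and the reconstruction of the left factors in $\R^m$, and that both become unnecessary once the goal is reduced to producing $V^\ell$, which is the content of premature truncation established in Propositions~\ref{approximatesingularvectorsspanthesame} and~\ref{same_distances}. A secondary care point is to realize the final decomposition through the $l \times l$ Gram matrix rather than a direct SVD of $B$, so that every heavy step is either a sparse multiply, a dense multiply whose cost is governed by $n$, or an $l \times l$ eigendecomposition, keeping the bound free of any $m l^2$ contribution and matching the matrix-multiplications-plus-one-small-eigendecomposition structure promised in the abstract.
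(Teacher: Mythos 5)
Your proposal is correct and follows essentially the same route as the paper: itemize the standard costs of each step ($\mathcal{O}(\text{nnz}(X)l)$ sparse multiplies, $\mathcal{O}(nl^2)$ for the small SVD, $\mathcal{O}(ml^2)$ for QR), then observe that premature truncation lets Lazy SPCA discard exactly the $m$-dependent $\mathcal{O}(ml^2)$ work. Your extra details (the Gram-matrix eigendecomposition route and the dropped lift $U^s = Q\tilde{U}$) only make explicit what the paper relegates to its Algorithms-table footnote; they do not change the argument.
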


\begin{proof}
To obtain $V^s$, SPCA uses pathway (\ref{approx_svd}), whose complexity is determined by $\mathcal{O}(\text{nnz}(X)l)$ sparse matrix multiplication,  $\mathcal{O}(nl^2)$ SVD,  and $\mathcal{O}(ml^2)$ QR decomposition.  To obtain $V^\ell$, Lazy SPCA uses pathway (\ref{simplified_approx_svd}) with premature truncation.  This can be seen as discarding either an $\mathcal{O}(ml^2)$ QR in (\ref{approx_svd}) or an $\mathcal{O}(ml^2)$ SVD in (\ref{simplified_approx_svd}).
\end{proof}


\section{ALGORITHMS} 

\begin{table*}
\centering
  \begin{tabular}{l}
  \hline
{\bf Data}  Dataset $X_{m \times n}$ (for streaming versions, split into $s$ horizontal slices, denoted $X_s$); target dimensionality  $k$; \\random projection matrix $\Omega \in \R^{n \times l}$ where $l \geq k$.\\
{\bf Result} Dimension reduction map $\pi =  V_{k \times n}^T$  \\
\hline 
\multicolumn{1}{c}{{\bf Straightforward implementations}}\\
\noindent \begin{minipage}[t]{.4\linewidth}
\vspace{0pt}
{\bf Algorithm 1:} \emph{SPCA~\cite{halko}} 
\begin{enumerate}
\item Construct $U=X \Omega$ 
\item Orthonormalize via $Q,R = \text{qr}(U)$
\item Form $F= Q^T X$, as in (\ref{approx_svd}).
\item Decompose${}^\dagger$ $F = \tilde{U}_{k,k} D_{k,k} V^T_{k,n}$.
\end{enumerate}
\vspace{2pt}
\end{minipage}
\hfill
\noindent \begin{minipage}[t]{.4\linewidth}
\vspace{0pt}

{\bf Algorithm 2:} \emph{Lazy SPCA}
\begin{enumerate}
\item Construct $U=X \Omega$. 
\item Form $F= U^T X$, as in (\ref{simplified_approx_svd}).
\item Decompose${}^\dagger$ $F = \tilde{U}_{k,k} D_{k,k} V^T_{k,n}$.
\end{enumerate}
\vspace{2pt}
\end{minipage}
\\
\hline 
\multicolumn{1}{c}{{\bf Streaming implementations}}\\
\noindent \begin{minipage}[t]{.4\linewidth}
\vspace{0pt}
{\bf Algorithm 3:} \emph{Streaming SPCA~\cite{dsaa}}
\begin{enumerate}
\item Initialize $U_1=X_1 \Omega$; $\tilde{Q},R=\text{qr}(U_1)$; and $F=U_1^TX_1$.
\item {\bf for} $s  \in \{2, 3, \hdots, S\}$  {\bf do} 
\begin{enumerate} 
\item Construct $U_s = X_s \Omega$. 

\item Update  $F \mathrel{+}= U_s^T X_s$, as in (\ref{approx_svd}).
\item Update R via $\tilde{Q},R = \text{qr}( \begin{bmatrix}  R \;\\ U_s \end{bmatrix} )$.
\end{enumerate}
{\bf end for} 
\item  Update  $F = (R^{-1})^T F$, as in (\ref{approx_svd}).
\item Decompose${}^\dagger$ $F = \tilde{U}_{k,k} D_{k,k} V^T_{k,n}$. 
\end{enumerate}
\vspace{2pt}
\end{minipage}
\hfill
\noindent \begin{minipage}[t]{.4\linewidth}
\vspace{0pt}

{\bf Algorithm 4:} \emph{Streaming Lazy SPCA}
\begin{enumerate}
\item Initialize $U_1=X_1 \Omega$ and $F=U_1^TX_1$.
\item {\bf for} $s  \in \{2, 3, \hdots, S\}$  {\bf do} 
\begin{enumerate}
\item Construct $U_s = X_s \Omega$.
\item Update $F \mathrel{+}= U_s^T X_s $, as in (\ref{simplified_approx_svd}).
\end{enumerate}
{\bf end for} 
\item Decompose${}^\dagger$ $F = \tilde{U}_{k,k} D_{k,k} V^T_{k,n}$.
\vspace{2pt}
\end{enumerate}
\end{minipage} \\
\hline
\end{tabular}
  \caption{ \emph{Implementations (both straightforward and streaming) of SPCA and Lazy SPCA.}    The notation $\text{qr}$ refers to a subroutine for performing QR decomposition.  The tilde notation for the output $\tilde{Q}$ refers to the fact that the matrix is not used and needs not be explicitly computed. ${}^\dagger$: Note that all decompositions of matrix factor $F$ in the final step are done by a truncated SVD (i.e. compute or retain only the $k$ dominant singular vectors and values).  If desired for distributed implementations (e.g.~\cite{mahout}), the SVD can be executed via nothing more than matrix multiplications and the eigensolution of a small $l \times l$ matrix.  Compute the eigensolution $FF^T=U \Lambda U^T$.  Then $D=\Lambda^{1/2}$ and $V=F^TUD^{-1}$, where the exponents of the diagonal matrices refer to elementwise operations.  Using this, note that Algorithm 4 can trivially be extended to the case where $X$ is sliced into both horizontal and vertical blocks.}
\end{table*}

We provide straightforward (i.e., in core) implementations of SPCA  and Lazy SPCA  in Algorithms 1 and 2.   However, stochastic dimensionality reduction is typically performed when $X$ exceeds the size of a computer's core memory.\footnote{Otherwise, if $X$ fits into memory, traditional (deterministic) SVD would be easily applied.}  Thus, we also provide streaming implementations in Algorithms 3 and 4.    Overall, Lazy SPCA is both faster and less cumbersome for large datasets (as all operations can be rendered as matrix multiplications, besides finding the eigensolution of a small $l \times l$ matrix.)





\section{Experiment 1} \label{experiment1}


In Experiment 1, we demonstrate the techniques on a large dataset from the context of automatic malware classification.

\subsection{Data and Method}

This dataset~\cite{dsaa} consists of 4,608,517 portable executable files and determined to be either malicious or clean.   Each file is represented as 98,450 features, mostly binary, with mean density 0.0244. 

We applied three dimensionality reduction methods to this dataset: RP, SPCA and Lazy SPCA.   Across methods, we employed fixed \emph{very sparse random projections} with density set to $log(k)/k$, the aggressive value in~\cite{li}. The target dimensionality $k$ was set to 100, 500, 1000, 5000, 10000 and 20000.    For simplicity, we avoid oversampling and set $l=k$.   

The dataset was divided up into horizontal slices that were represented as (Float32, Int64) Sparse CSC matrices.  The most expensive steps (dense-by-sparse matrix multiplication and QR decomposition) were implemented using the Intel Math Kernel Library (\emph{mkl}).  All computations were performed in Julia v0.3.8 on a single Amazon EC2 r3.8 instance with 16 physical cores (32 hyperthreaded cores) and 244 GB of RAM.\footnote{Note that although the exact timings depend on implementation, the qualitative properties of the results depend on the computational complexity of the algorithms.}
  
A $L^1$-penalized logistic regression (or ``logistic lasso") classifier was trained on 80\% of the samples, randomly selected without replacement.\footnote{The model complexity parameter was fixed at 1 (rather than optimized) to place equal weight on the likelihood term and the penalty term.}   The classifier was tested on the remaining 20\%.

\subsection{Results}

  \begin{figure*}
\begin{minipage}[t]{.48\linewidth}
\vspace{.4in}
\centering
  \begin{tabular}{lccc}
    \toprule
   & \multicolumn{3}{c}{Dimensionality Reduction} \\
 Target Dim. ($k$)   & RP & SPCA & Lazy SPCA \\
    \midrule
100 & 89.63 & {\bf 94.86} & {\bf 94.84} \\
500 & 94.41 & {\bf 97.24} & {\bf 97.24} \\
1,000 & 96.40 & {\bf 97.98} & {\bf 97.98} \\
5,000 & 98.38 & {\bf 98.74} & {\bf 98.74} \\
10,000 & 98.73 & {\bf 98.92} & {\bf 98.92} \\
15,000 & 98.84 & {\bf 98.99} & {\bf 98.99} \\
20,000 & 98.94 & {\bf 99.03} & {\bf 99.03} \\
  \bottomrule
\end{tabular}
\vspace{0pt}
\end{minipage}
\begin{minipage}[t]{.4\linewidth}
\vspace{0pt}
\centering
\includegraphics[width=\linewidth, angle=90]{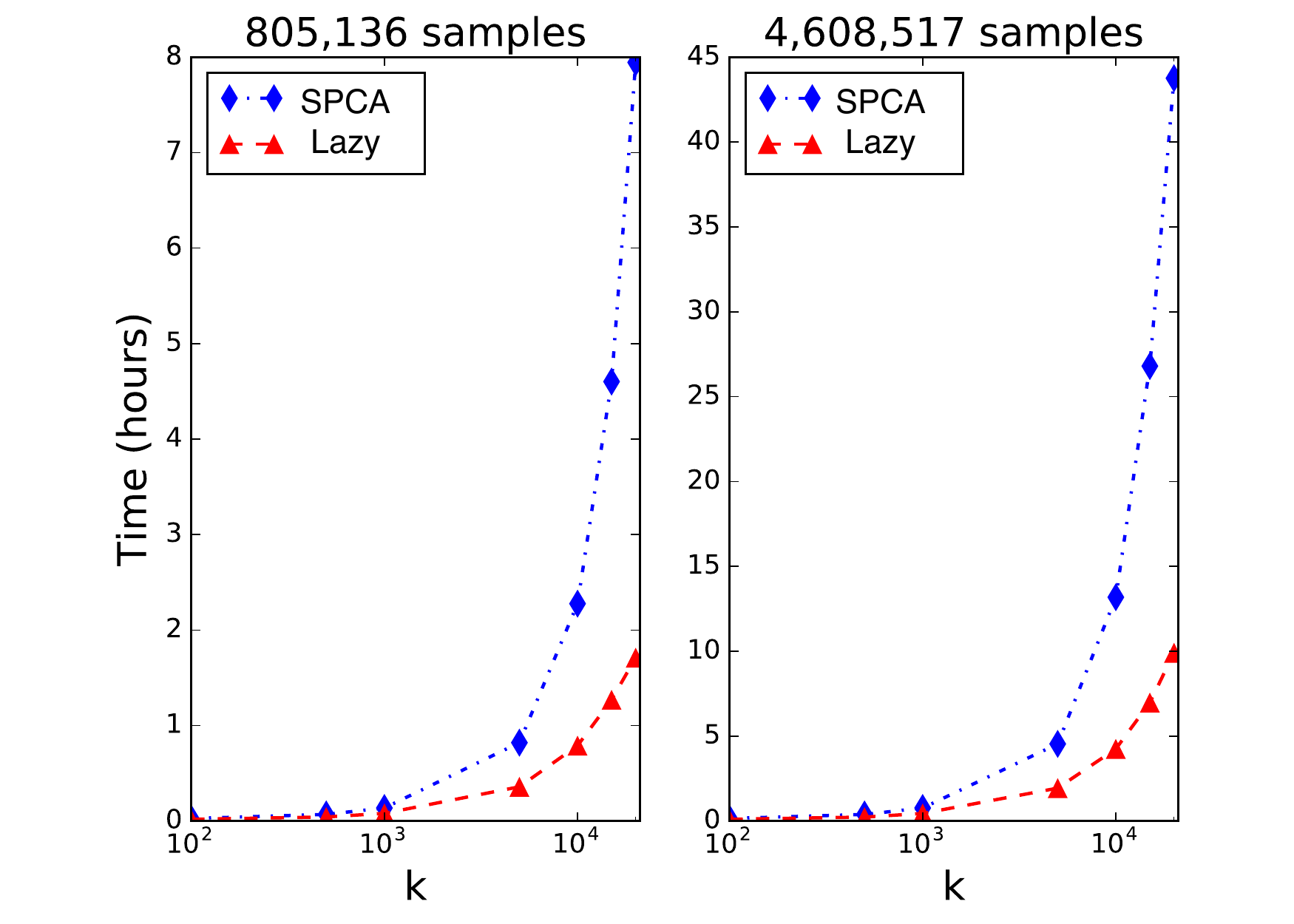}
\end{minipage}%
\caption{The table shows predictive performance (\%) by a L1-penalized logistic regression classifier on a hold-out test set after various large-scale dimensionality reduction methods.  The plot shows run times over the same target dimensionalities.}
\label{expt1results}
  \end{figure*}

In the left table of Figure~\ref{expt1results},  we see that SPCA and Lazy SPCA yield identical downstream predictive performance.  Moreover, both outperform RP for all $k$, although the superiority decreases as $k$ increases.    In the right plots of Figure~\ref{expt1results}, we see that Lazy SPCA is faster than SPCA across all $k$.  This superiority increases as $k$ increases, as expected by Proposition 4.   In the largest data analysis, it took 9.9 versus 43.7 hours to obtain an equivalently useful dimensionality reduction.

\section{Experiment 2} \label{experiment2}

In Experiment 2, we demonstrate the techniques on a smaller yet publicly available dataset, with publicly available code.\footnote{See https://github.com/CylanceSPEAR/lazy-stochastic-principal-component-analysis.}  


\subsection{Data and Methods}

We evaluated our dimensionality reduction methods on the Home Depot Product Search Relevance dataset from the Kaggle competition of the same name.  The goal is to predict the ratings of the relevance of a customer search term to a product.\footnote{For example, one rater might consider a search for "AA battery"  to be highly relevant to a pack of size AA batteries (relevance = 3), mildly relevant to a cordless drill battery (relevance = 2.2), and not relevant to a snow shovel (relevance = 1.3).}    For this study, we generated features by using the co-occurrence TF-IDF of product title and search terms.   The resulting dataset of $74,067$ samples (search term-product pairs) and $28,606$ features has a density of $0.000513$.   Dimensionality reduction was performed as in Experiment 1, except that the RP matrix was constructed using the conservative density $\sqrt{k}$ [10].  Distances between approximate principal subspaces were measured by the chordal distance on the Grassmann manifold, $||V_iV_i^T - V_jV_j^T||_F$, where the columns of $V_i$ form an orthonormal basis for the $i$th subspace. 

\subsection{Results}

 In Figure~\ref{homedepotresults}, we see that SPCA and Lazy SPCA result in identical downstream predictive accuracy across a range of target dimensionalities $k$, and that both outperform RP.  To help explain this, Figure~\ref{subspace} shows that, as expected by Proposition 2, SPCA and Lazy SPCA project samples onto the same $k$-dimensional approximate principal subspace $\widehat{\mathcal{P}} \subset \R^n$ (which is closer to the true principal subspace $\mathcal{P}$ than the subspace found by RP).

\begin{figure}
\begin{subfigure}{.25\textwidth}
  \centering
  \includegraphics[width=\linewidth]{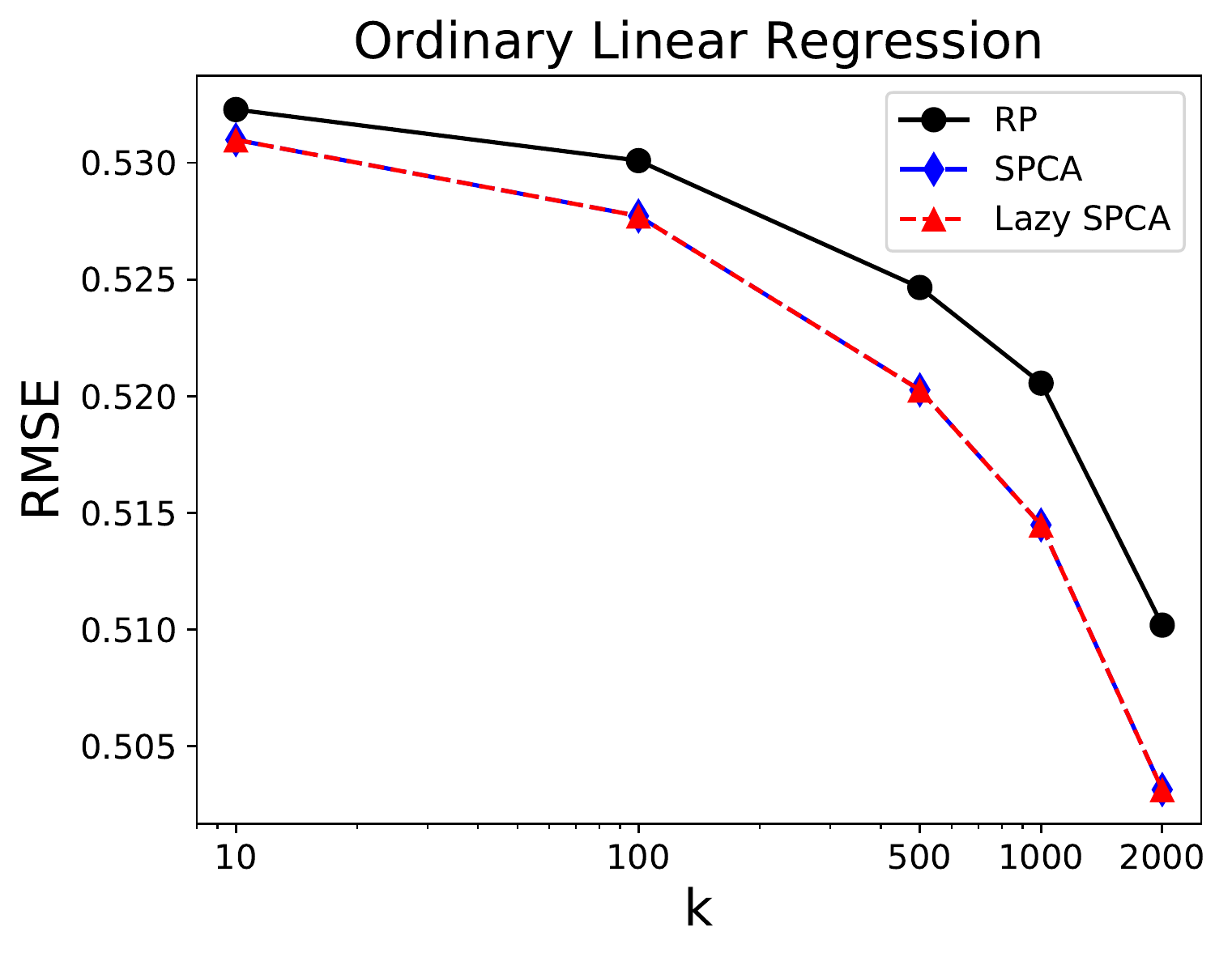}
\end{subfigure}%
\begin{subfigure}{.25\textwidth}
  \centering
  \includegraphics[width=\linewidth]{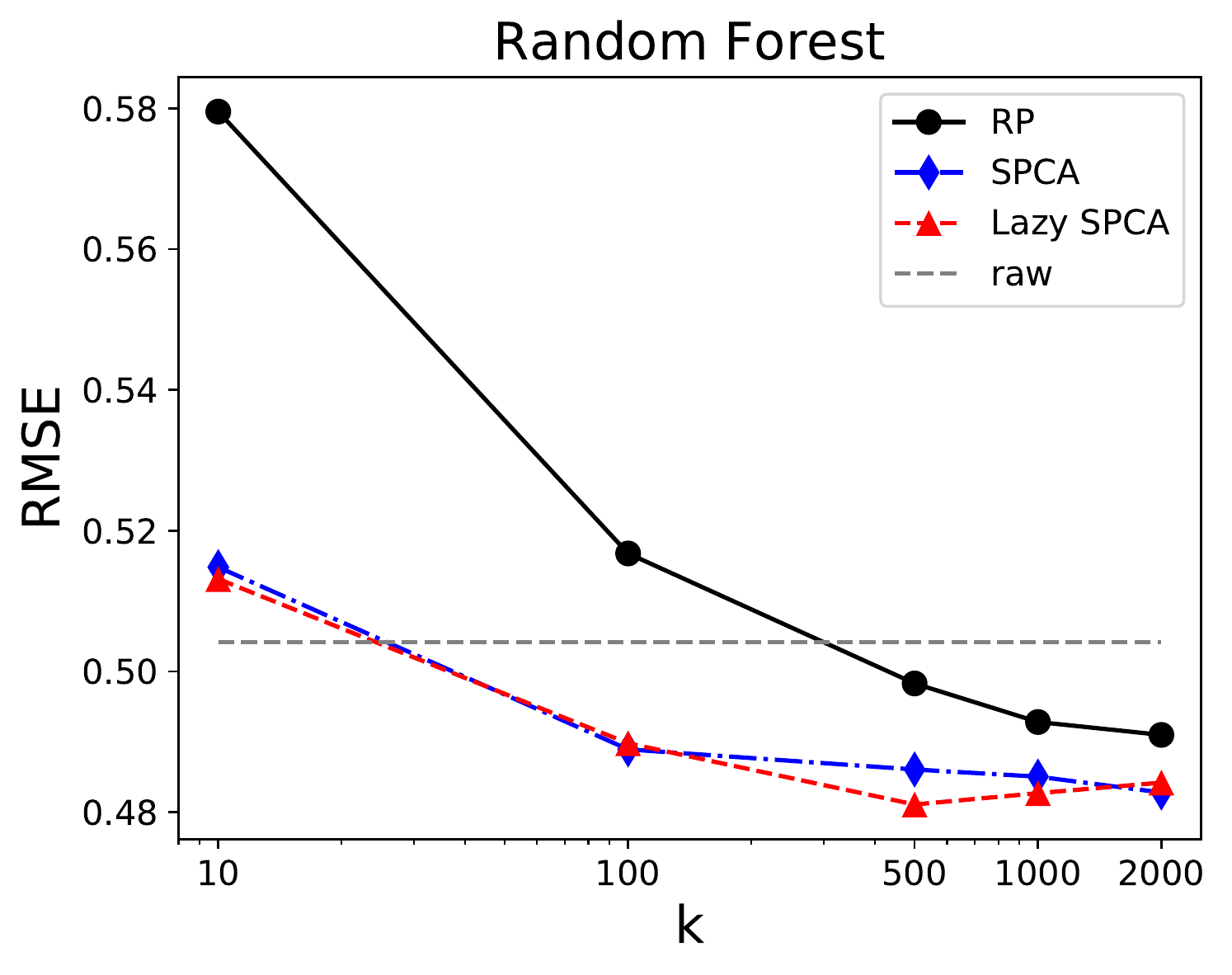}
\end{subfigure}
\caption{ Error in downstream prediction for linear regression (left) and random forest (right) after three different dimensionality reduction strategies.}
\label{homedepotresults}
\end{figure}

\begin{figure}
  \centering
  \includegraphics[width=.5\linewidth]{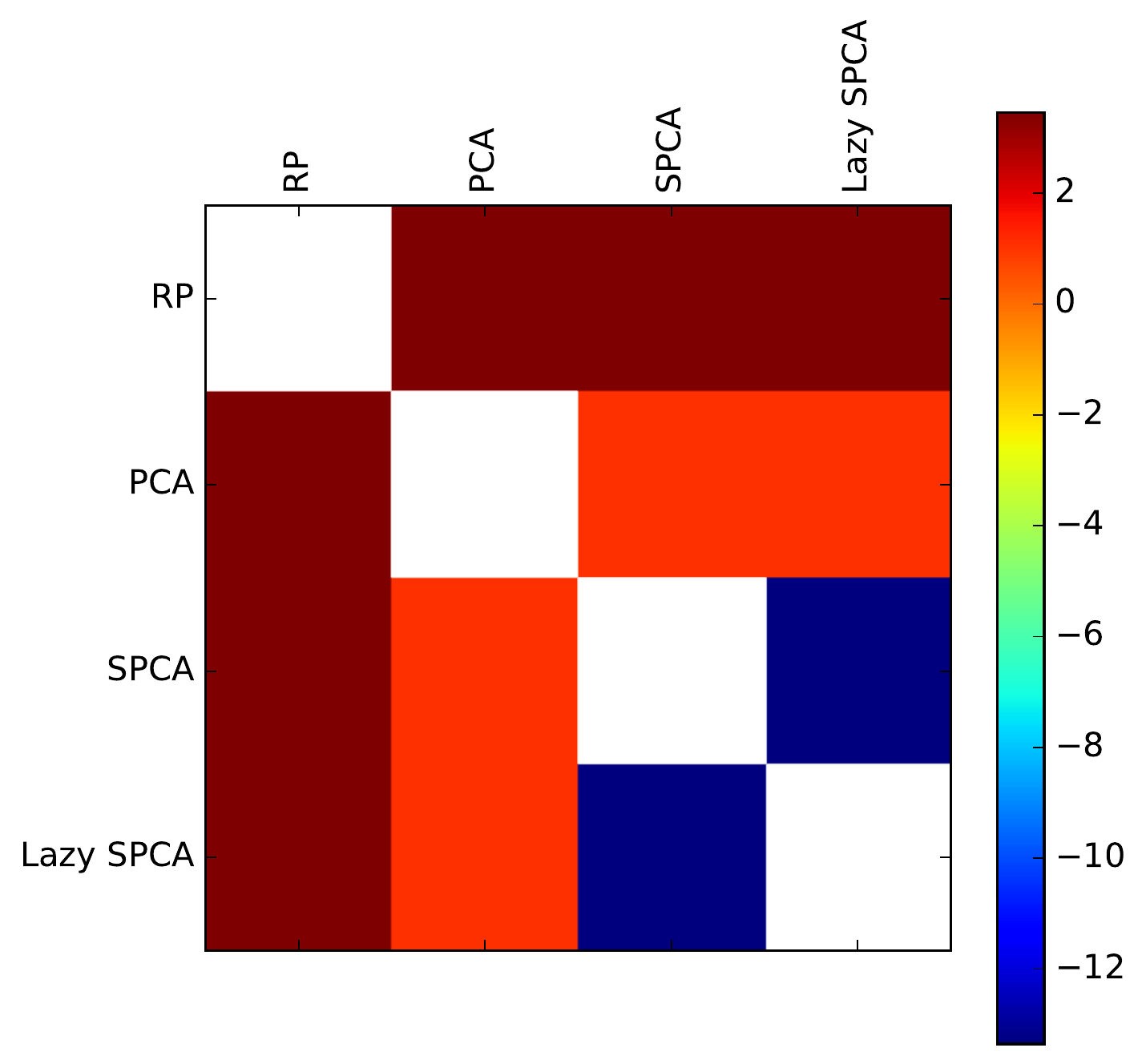}
\caption{ Distances (log scale) between principal subspaces into which samples are projected by the various stochastic dimensionality reduction techniques (here k=100, but the result is typical across $k$). }
\label{subspace}
\end{figure}



\section{Conclusion} \label{conclusion}

We develop a framework for simplifying stochastic principal component analysis when used as a tool for dimensionality reduction. Compared to SPCA, Lazy SPCA is both faster and better suited for distributed computation.  At the same time, it projects samples to the same subspace, yields identical pairwise distances between samples, and results in identical empirical performance in downstream classification. 

\section*{Acknowledgments}
We thank John Hendershott Brock for helpful comments. 

\begin{thebibliography}{1}
%
%



\bibitem{dahl}
Dahl, G. E., Stokes, J. W., Deng, L., \& Yu, D. (2013, May). Large-scale malware classification using random projections and neural networks. In Acoustics, Speech and Signal Processing (ICASSP), 2013 IEEE International Conference on (pp. 3422-3426). IEEE.
\bibitem{influence}
Wojnowicz, M., Cruz, B., Zhao, X., Wallace, B., Wolff, M., Luan, J., \& Crable, C. (2016). \quotes{Influence sketching}: Finding influential samples in large-scale regressions. In Big Data (Big Data), 2016 IEEE International Conference on (pp. 3601-3612). IEEE.
\bibitem{li}
Li, P., Hastie, T. J., \& Church, K. W. (2006, August). Very sparse random projections. In Proceedings of the 12th ACM SIGKDD international conference on Knowledge discovery and data mining (pp. 287-296). ACM.
\bibitem{ailon}
Ailon, N., \& Chazelle, B. (2009). The fast Johnson-Lindenstrauss transform and approximate nearest neighbors. SIAM Journal on Computing, 39(1), 302-322.
\bibitem{hashing}
Weinberger, K., Dasgupta, A., Langford, J., Smola, A., \& Attenberg, J. (2009, June). Feature hashing for large scale multitask learning. In Proceedings of the 26th Annual International Conference on Machine Learning (pp. 1113-1120). ACM.
\bibitem{halko}
Halko, N., Martinsson, P. G., Shkolnisky, Y., \& Tygert, M. (2011). An algorithm for the principal component analysis of large data sets. SIAM Journal on Scientific computing, 33(5), 2580-2594.
\bibitem{matlab}
Liutkus, A. (2014). Randomized SVD. MATLAB Central File Exchange.
\bibitem{scikitlearn}
Pedregosa, F. et al. (2011). Scikit-learn: Machine learning in Python. JMLR, 12:2825-2830.
\bibitem{mahout}
Lyubimov, D., \& Palumbo, A. (2016). Apache Mahout: Beyond MapReduce. CreateSpace Independent Publishing Platform.
\bibitem{facebook}
Tulloch, A.  (2014). Fast randomized singular value decomposition. http://research.facebook.com/blog/294071574113354/fast-randomized-svd/
\bibitem{martinsson}
Halko, N., Martinsson, P. G., \& Tropp, J. A. (2011). Finding structure with randomness: Probabilistic algorithms for constructing approximate matrix decompositions. SIAM review, 53(2), 217-288.
\bibitem{kumar}
Kishore Kumar, N., \& Schneider, J. (2016). Literature survey on low-rank approximation of matrices. Linear and Multilinear Algebra, 1-33.
\bibitem{jolliffe}
Jolliffe, I. T., \& Cadima, J. (2016). Principal component analysis: a review and recent developments. Phil. Trans. R. Soc. A, 374(2065), 20150202.
\bibitem{mahoney}
Mahoney, M. W. (2011). Randomized algorithms for matrices and data. Foundations and Trends in Machine Learning, 3(2), 123-224.
\bibitem{dsaa}
Wojnowicz, M., Zhang, D., Chisholm, G., Zhao, X., \& Wolff, M. (2016). Projecting  \quotes{better than randomly}: How to reduce the dimensionality of very large datasets in a way that outperforms random projections. In Data Science and Advanced Analytics (DSAA), 2016 IEEE International Conference on (pp. 184-193). IEEE.

%
\end{thebibliography}
\end{document}